\documentclass{article}

\usepackage[preprint]{neurips_2026}

\usepackage[utf8]{inputenc}
\usepackage[T1]{fontenc}
\usepackage{microtype}
\usepackage{url}
\usepackage{booktabs}
\usepackage{amsmath,amssymb,amsfonts,amsthm,mathtools}
\usepackage{graphicx}
\usepackage{xcolor}
\usepackage{hyperref}

\title{Spectral Graph Sparsification Preserves Representation Geometry in Graph Neural Networks}

\author{%
  Sanjukta Krishnagopal \\
  Department of Computer Science, University of California Santa Barbara \\
  \texttt{sanjukta@ucsb.edu}
}

\newtheorem{theorem}{Theorem}
\newtheorem{proposition}{Proposition}
\newtheorem{lemma}{Lemma}
\newtheorem{corollary}{Corollary}

\begin{document}

\maketitle

\begin{abstract}
Spectral graph sparsification is a classical tool for reducing graph complexity while preserving Laplacian quadratic forms. In graph neural networks (GNNs), sparsification is often used to accelerate computation while maintaining predictive performance. In this work, we study a complementary representation-level question: does sparsification preserve the geometry of learned embeddings?

For polynomial-filter GNNs, we prove that any $\epsilon$-spectral sparsifier induces $O(\epsilon)$ perturbations in polynomial graph filters, multilayer hidden representations, and their Gram matrices. These guarantees imply stability of squared pairwise distances, class means, and covariance structure in embedding space. We further establish finite-time training stability: under smoothness and boundedness assumptions, gradient descent on dense and sparsified graphs produces weight trajectories whose separation grows at most proportionally to the sparsification distortion.


Empirically, effective-resistance sparsification validates the predicted perturbation chain on synthetic graphs and preserves hidden representation geometry on real datasets. In our experiments, the Gram matrix and training dynamics show low divergence even under substantial sparsification, consistent with the predicted stability under spectral sparsification. Hidden Gram preservation strongly predicts neighborhood preservation and class-centroid stability across FashionMNIST, Cora, and Paul15. Together, these results show that spectral sparsification preserves not only graph operators, but also the representation geometry that supports downstream use of GNN embeddings for interpretability.
\end{abstract}

\section{Introduction}
Graph neural networks (GNNs) are a standard framework for learning from relational data, with applications in citation networks, molecules, recommendation systems, and scientific data analysis
\citep{kipf2017gcn,hamilton2017graphsage,defferrard2016cnn}. Their effectiveness, however, comes with a scalability cost: training and inference often depend strongly on the size and density of the underlying graph. This is especially true for architectures whose propagation rules are expressed through graph operators, such as spectral and polynomial-filter GNNs~\cite{bruna2014spectral,defferrard2016cnn}.

Spectral graph sparsification offers a principled way to reduce this cost. Classical sparsification theory shows that a graph can be replaced by a much sparser graph whose Laplacian $\widetilde L$ satisfies
\[
(1-\varepsilon)L \preceq \widetilde L \preceq (1+\varepsilon)L,
\]
thereby preserving quadratic forms and spectral structure up to controlled distortion
\citep{spielman2011graph}. Recent work has begun to use such ideas for efficient GNNs, including effective-resistance- and Laplacian-based sparsification methods for spectral and attention-style architectures
\citep{ding2025largescale}. Most of this literature evaluates sparsification through operator approximation, runtime, or predictive accuracy.

In this paper, we ask a representation-centered question: \emph{does spectral sparsification preserve the geometry of learned hidden representations?} This question is important because GNNs are not only predictors; they are also representation learners
\citep{hamilton2017representation}. Their hidden states encode pairwise similarities, class separation, and global latent geometry. Two sparse graphs may yield similar accuracy while inducing substantially different embeddings, which can affect downstream uses such as retrieval, clustering, transfer learning, and interpretation. Recent work on feature evolution and neural collapse in GNNs further emphasizes that graph topology can shape representation geometry during training
\citep{kothapalli2023neuralcollapsegnn,papyan2020neuralcollapse}.
We focus on polynomial-filter GNNs, a broad class of spectral graph neural networks in which each layer applies a finite-degree polynomial in a graph operator before feature mixing and nonlinear activation \citep{defferrard2016cnn}. This class includes Chebyshev and other finite-hop spectral filters, and it also captures first-order propagation layers when the graph shift is written as a degree-one polynomial operator. It is therefore a natural setting for studying spectral sparsification, since the sparsifier directly perturbs the graph operator appearing in the layer.
Building on prior analyses of GNN stability and transferability under graph perturbations \citep{gama2020stability,ruiz2021gnnprocieee,ruiz2023transferability}, we move beyond operator-level guarantees and analyze hidden representation preservation directly. We prove that spectral sparsification stabilizes multilayer embeddings and their Gram matrices, yielding preservation of pairwise distances, class means, and covariance structure. We also analyze learning dynamics under gradient descent in a one-layer model and give a multilayer extension under stronger smoothness and boundedness assumptions, connecting to recent work on GNN training behavior on growing graphs \citep{krishnagopal2023gntk}.

\paragraph{Contributions.}
\begin{itemize}
    \item We introduce a representation-geometric perspective on graph sparsification for GNNs.
    \item We prove multilayer embedding stability and Gram-matrix stability under spectral sparsification.
    \item We establish finite-time training-dynamics stability in a one-layer setting, and a multilayer extension under stronger smoothness assumptions.
    \item We show empirically, using effective-resistance sparsification as a concrete testbed, on  synthetic and real data, that hidden-Gram preservation predicts neighborhood and class-geometry preservation in learned embeddings, improving interpretability and propagation fidelity.
\end{itemize}

\paragraph{Relation to prior work.}
Our work connects several lines of research. First, spectral graph sparsification provides classical
operator-level guarantees for replacing a graph by a sparse spectral approximation
\citep{spielman2011graph}. Second, recent sparsification-for-GNN methods use these ideas to
improve efficiency, focusing mainly on propagation fidelity and predictive performance
\citep{ding2025largescale}. Third, prior work studies the stability and transferability of GNNs
under graph perturbations, graph shifts, and graph-size changes
\citep{gama2020stability,ruiz2021gnnprocieee,ruiz2023transferability}. Fourth, recent work
studies the geometry of learned features in graph representation learning, including neural collapse
in GNNs \citep{kothapalli2023neuralcollapsegnn,papyan2020neuralcollapse}. Finally, our viewpoint is
also related to large-graph asymptotic analyses of GNN transferability and learning dynamics
\citep{krishnagopal2023gntk}. Our contribution lies at the intersection of these directions: we
study whether principled spectral sparsification preserves the geometry of learned representations,
not only the propagation operator or task accuracy.

\section{Preliminaries and Main Results}
\label{sec:main_results}

\subsection{Setup and Notation}

Let $G=(V,E)$ be a graph on $n$ nodes, and let $L \in \mathbb{R}^{n\times n}$ denote a symmetric positive semidefinite graph operator, such as the combinatorial or normalized Laplacian. For symmetric matrices $A,B \in \mathbb{R}^{n\times n}$, we write $A \preceq B$ if $B-A$ is positive semidefinite. We use $\|\cdot\|_2$ for the operator norm and $\|\cdot\|_F$ for the Frobenius norm.

We say that $\widetilde L$ is an $\varepsilon$-spectral sparsifier of $L$ if
\begin{equation}
(1-\varepsilon)L \preceq \widetilde L \preceq (1+\varepsilon)L,
\qquad 0<\varepsilon<1.
\label{eq:spectral_sparsifier}
\end{equation}

We study a class of $K$-layer polynomial filter graph neural networks of the form
\begin{equation}
H^{(0)} = X, \qquad
H^{(k+1)} = \sigma_k\!\bigl(p_k(L)H^{(k)}W_k\bigr),
\qquad k=0,\dots,K-1,
\label{eq:gnn_forward}
\end{equation}
where $X \in \mathbb{R}^{n\times d_0}$ is the input feature matrix, $H^{(k)} \in \mathbb{R}^{n\times d_k}$ is the hidden representation at layer $k$, $W_k \in \mathbb{R}^{d_k\times d_{k+1}}$ is the trainable weight matrix, $p_k$ is a real polynomial, and $\sigma_k$ is applied entrywise.

The corresponding network on the sparsified graph is defined by
\begin{align}
\widetilde H^{(0)} &= X, \notag\\
\widetilde H^{(k+1)}
&= \sigma_k\!\bigl(p_k(\widetilde L)\widetilde H^{(k)}W_k\bigr).
\label{eq:sparse_gnn_forward}
\end{align}
We denote the final embeddings by
\[
Z := H^{(K)},
\qquad
\widetilde Z := \widetilde H^{(K)}.
\]

Our analysis is based on the following assumptions.

\textbf{(A1) Polynomial filters.}
For each $k$, $p_k(x)=\sum_{r=0}^{d_k^\star} a_{k,r}x^r$ is a polynomial of finite degree $d_k^\star$.

\textbf{(A2) Lipschitz nonlinearities.}
Each activation $\sigma_k:\mathbb{R}\to\mathbb{R}$ is $L_{\sigma_k}$-Lipschitz.

\textbf{(A3) Bounded weights.}
There exists $B_W>0$ such that $\|W_k\|_2 \le B_W$ for all $k$.

\textbf{(A4) Bounded inputs.}
The input features satisfy $\|X\|_F \le B_X$.

\textbf{(A5) Bounded operators.}
There exists $B_L>0$ such that $\|L\|_2,\|\widetilde L\|_2 \le B_L$.

Assumptions (A1)--(A5) are standard in perturbation analyses of graph-based learning systems and are sufficient for the stability results below.

\subsection{Propagation Stability}

We first show that spectral sparsification preserves polynomial graph filters.

\begin{theorem}[Polynomial filter stability]
\label{thm:poly_stability}
Let $p(x)=\sum_{r=0}^{d} a_r x^r$ be a polynomial, and let $L,\widetilde L$ satisfy \eqref{eq:spectral_sparsifier}. Under (A5),
\begin{equation}
\|p(L)-p(\widetilde L)\|_2 \le C_p\,\varepsilon,
\label{eq:poly_stability}
\end{equation}
where
\[
C_p := \sum_{r=1}^{d} |a_r|\, r\, B_L^{\,r}.
\]
\end{theorem}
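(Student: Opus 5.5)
The plan is to reduce the statement to a bound on the single difference $D := \widetilde L - L$ and then propagate it through the monomials by telescoping. By the triangle inequality,
\[
\|p(L)-p(\widetilde L)\|_2 \le \sum_{r=1}^{d} |a_r|\,\|L^r-\widetilde L^{\,r}\|_2 .
\]
The $r=0$ term drops out because $a_0 I - a_0 I = 0$; this is why $C_p$ starts at $r=1$. It therefore suffices to show $\|L^r-\widetilde L^{\,r}\|_2 \le r B_L^{\,r}\varepsilon$ for each $r\ge 1$.

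\textbf{Step 1: operator-norm bound on $D$.} Subtracting $L$ throughout \eqref{eq:spectral_sparsifier} gives $-\varepsilon L \preceq D \preceq \varepsilon L$. Here $D$ is symmetric, since the Loewner order in \eqref{eq:spectral_sparsifier} is stated for symmetric matrices, so $\widetilde L$ is symmetric. For any unit vector $x$ this yields
\[
|x^\top D x| \le \varepsilon\, x^\top L x \le \varepsilon \|L\|_2 \le \varepsilon B_L ,
\]
using $L\succeq 0$ and (A5). For a symmetric matrix the operator norm equals $\sup_{\|x\|=1}|x^\top D x|$. Hence $\|D\|_2 \le \varepsilon B_L$.

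\textbf{Step 2: telescoping the monomials.} We use the identity
\[
L^r-\widetilde L^{\,r} = \sum_{j=0}^{r-1} L^{j}\,(L-\widetilde L)\,\widetilde L^{\,r-1-j}.
\]
This can be checked by noting that consecutive terms cancel, or by induction on $r$. Submultiplicativity together with (A5) applied to both $L$ and $\widetilde L$ bounds each summand by $B_L^{\,j}\,\|D\|_2\, B_L^{\,r-1-j} \le B_L^{\,r-1}\cdot \varepsilon B_L$. There are $r$ summands, so $\|L^r-\widetilde L^{\,r}\|_2 \le r B_L^{\,r}\varepsilon$. Summing over $r$ with weights $|a_r|$ gives exactly $C_p\varepsilon$.

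\textbf{Main obstacle.} The only genuinely nonroutine point is Step 1, which converts the multiplicative Loewner-order guarantee into an additive operator-norm bound. This requires two facts that must be checked explicitly. First, $L$ must be positive semidefinite, so that $\pm\varepsilon L$ controls the quadratic form of $D$ in absolute value. Second, $D$ must be symmetric, so that its norm is attained by its quadratic form. Both hold in the present setup, and the factor $\|L\|_2\le B_L$ is what supplies the extra power of $B_L$ in $C_p$. The telescoping step is purely mechanical, and it is essential to use (A5) for $\widetilde L$ as well as for $L$. One can avoid assuming a bound on $\widetilde L$ separately by noting that $\|\widetilde L\|_2\le(1+\varepsilon)\|L\|_2$, but that only changes constants.
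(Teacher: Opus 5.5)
Your proposal is correct and matches the paper's proof essentially step for step. Your Step 1 is the paper's auxiliary lemma, which gives $\|L-\widetilde L\|_2\le\varepsilon B_L$ from the quadratic-form sandwich using $L\succeq 0$ and symmetry. Your Step 2 is the same telescoping identity (only reindexed), bounded by submultiplicativity with (A5) for both operators and then summed over the coefficients with the triangle inequality.
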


Theorem~\ref{thm:poly_stability} is the operator-level ingredient in our analysis: it says that spectral closeness of the graph implies closeness of the propagation operators used by polynomial-filter GNNs. The full proof is given in Appendix~\ref{app:poly_stability}.

\subsection{Representation Stability}

We next propagate the operator perturbation through a multilayer nonlinear architecture.

\begin{theorem}[Multilayer representation stability]
\label{thm:multilayer_stability}
Suppose that \eqref{eq:spectral_sparsifier} holds and assumptions (A1)--(A5) are satisfied. Then there exists a constant $C_{\mathrm{rep}}>0$, depending only on the network depth, filter coefficients, Lipschitz constants, and bounds $B_X,B_W,B_L$, such that
\begin{equation}
\|Z-\widetilde Z\|_F \le C_{\mathrm{rep}}\,\varepsilon.
\label{eq:multilayer_stability}
\end{equation}
\end{theorem}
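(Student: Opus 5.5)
We will prove the bound by induction on the layer index, writing $\Delta_k := \|H^{(k)}-\widetilde H^{(k)}\|_F$ and $\beta_k := \|H^{(k)}\|_F$ (and similarly $\widetilde\beta_k$). Since $\widetilde H^{(0)}=H^{(0)}=X$, we have $\Delta_0=0$. The aim is a linear recursion of the form $\Delta_{k+1}\le a_k\Delta_k + b_k\varepsilon$, which unrolls to $\Delta_K\le C_{\mathrm{rep}}\varepsilon$.

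\textbf{Step 1: a priori bounds on the hidden states.} The constant in the recursion will involve $\|H^{(k)}\|_F$, so we first bound it. We would like $\sigma_k(0)=0$ (e.g.\ ReLU, tanh). In general we would track $|\sigma_k(0)|$, which adds a term $\sqrt{n\,d_{k+1}}\,|\sigma_k(0)|$; to keep the constant free of $n$ we would state that $\sigma_k(0)=0$ is assumed, or absorb this term into the constant. Under $\sigma_k(0)=0$, entrywise Lipschitzness gives $\|\sigma_k(M)\|_F\le L_{\sigma_k}\|M\|_F$. Using $\|p_k(L)\|_2\le P_k:=\sum_r|a_{k,r}|B_L^r$ (from (A5)) and $\|AMB\|_F\le\|A\|_2\|M\|_F\|B\|_2$, we get
\[
\beta_{k+1}\le L_{\sigma_k}P_kB_W\beta_k,\qquad \beta_0\le B_X,
\]
so $\beta_k\le B_X\prod_{j<k}L_{\sigma_j}P_jB_W$. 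The same bound holds for $\widetilde\beta_k$, since $\|\widetilde L\|_2\le B_L$.

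\textbf{Step 2: one-layer perturbation.} By Lipschitzness of $\sigma_k$ applied entrywise,
\[
\Delta_{k+1}\le L_{\sigma_k}\bigl\|p_k(L)H^{(k)}W_k-p_k(\widetilde L)\widetilde H^{(k)}W_k\bigr\|_F .
\]
Add and subtract $p_k(\widetilde L)H^{(k)}W_k$ to split the right-hand side into an operator-perturbation term and a propagated-error term:
\[
\Delta_{k+1}\le L_{\sigma_k}B_W\bigl(\|p_k(L)-p_k(\widetilde L)\|_2\,\beta_k + P_k\Delta_k\bigr).
\]
Theorem~\ref{thm:poly_stability} bounds the first factor by $C_{p_k}\varepsilon$. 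This gives the recursion with $a_k=L_{\sigma_k}B_WP_k$ and $b_k=L_{\sigma_k}B_WC_{p_k}\beta_k$.

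\textbf{Step 3: unrolling.} Solving the linear recursion from $\Delta_0=0$ gives
\[
\Delta_K\le\varepsilon\sum_{k=0}^{K-1}b_k\prod_{j=k+1}^{K-1}a_j=:C_{\mathrm{rep}}\varepsilon .
\]
This constant depends only on $K$, the coefficients $a_{k,r}$, the constants $L_{\sigma_k}$, and the bounds $B_X,B_W,B_L$, as the theorem requires.

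The main obstacle is Step 1, specifically the bias issue when $\sigma_k(0)\neq0$: the hidden-state bound then picks up a dimension-dependent term. We would handle it by stating $\sigma_k(0)=0$ explicitly, or by letting the constant depend on $n$. Everything else is a direct application of Theorem~\ref{thm:poly_stability} and submultiplicativity of the norms.
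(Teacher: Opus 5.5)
Your proposal is correct and follows the paper's proof essentially step for step: a priori Frobenius bounds on the hidden states, then adding and subtracting $p_k(\widetilde L)H^{(k)}W_k$ to get $\Delta_{k+1}\le L_{\sigma_k}B_W(C_{p_k}\varepsilon\,\|H^{(k)}\|_F+\|p_k(\widetilde L)\|_2\Delta_k)$, then unrolling from $\Delta_0=0$. The only cosmetic difference is that you use the explicit bound $P_k=\sum_r|a_{k,r}|B_L^r$ where the paper uses $M_k=\sup_{\|S\|_2\le B_L}\|p_k(S)\|_2$. Your remark about $\sigma_k(0)\neq 0$ is well taken. The paper keeps the term $|\sigma_k(0)|\sqrt{n d_{k+1}}$ in its forward bound and absorbs it into constants $B_k'$, so its $C_{\mathrm{rep}}$ also silently depends on $n$, $d_{k+1}$ and $\sigma_k(0)$, contrary to the dependence claimed in the statement, unless one assumes $\sigma_k(0)=0$ as you propose.
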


The result shows that spectral sparsification controls not only the graph operator but also the learned embedding itself. The full proof is given in Appendix~\ref{app:multilayer_stability}.

\subsection{Representation Geometry}

The main conceptual contribution of this paper is that sparsification preserves the geometry of the learned embedding space.
\begin{theorem}[Gram matrix stability]
\label{thm:gram_stability}
Under the assumptions of Theorem~\ref{thm:multilayer_stability}, there exist constants
$C_{{\rm gram},2},C_{{\rm gram},F}>0$ such that
\[
\|ZZ^\top-\widetilde Z\widetilde Z^\top\|_2
\le C_{{\rm gram},2}\epsilon,
\qquad
\|ZZ^\top-\widetilde Z\widetilde Z^\top\|_F
\le C_{{\rm gram},F}\epsilon .
\]
\end{theorem}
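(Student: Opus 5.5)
The plan is to write the Gram difference as a sum of two terms, each containing one factor of the representation error $Z-\widetilde Z$:
\[
ZZ^\top-\widetilde Z\widetilde Z^\top = Z(Z-\widetilde Z)^\top + (Z-\widetilde Z)\widetilde Z^\top .
\]
For both the operator and Frobenius norms we then use submultiplicativity in the mixed form $\|AB\|_F\le \|A\|_2\|B\|_F \le \|A\|_F\|B\|_F$. This gives
\[
\|ZZ^\top-\widetilde Z\widetilde Z^\top\|_{2},\ \|ZZ^\top-\widetilde Z\widetilde Z^\top\|_{F} \;\le\; \bigl(\|Z\|_F+\|\widetilde Z\|_F\bigr)\,\|Z-\widetilde Z\|_F .
\]
Theorem~\ref{thm:multilayer_stability} bounds the last factor by $C_{\mathrm{rep}}\varepsilon$. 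What remains is an $\varepsilon$-independent bound on $\|Z\|_F$ and $\|\widetilde Z\|_F$.

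For that, we prove a layerwise norm bound by induction on $k$.

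\textbf{Base case.} $\|H^{(0)}\|_F=\|X\|_F\le B_X$ by (A4).

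\textbf{Inductive step.} Since $\sigma_k$ is $L_{\sigma_k}$-Lipschitz, $|\sigma_k(t)|\le L_{\sigma_k}|t|+|\sigma_k(0)|$. Applying this entrywise gives
\[
\|H^{(k+1)}\|_F \le L_{\sigma_k}\|p_k(L)\|_2\, B_W\,\|H^{(k)}\|_F + |\sigma_k(0)|\sqrt{n\,d_{k+1}} .
\]
Here (A5) yields $\|p_k(L)\|_2\le \sum_r |a_{k,r}|B_L^r =: P_k$. Because (A5) bounds $\|\widetilde L\|_2$ by the same $B_L$, the identical recursion holds for $\widetilde H^{(k)}$.

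Unrolling the recursion gives $\|Z\|_F,\|\widetilde Z\|_F\le B_Z$, where $B_Z$ depends on depth, the filter coefficients, the Lipschitz constants, and $B_X,B_W,B_L$. We may then take $C_{{\rm gram},2}=C_{{\rm gram},F}=2B_Z C_{\mathrm{rep}}$. A slightly sharper operator-norm constant, $(\|Z\|_2+\|\widetilde Z\|_2)C_{\mathrm{rep}}$, is also available but is not needed.

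The main obstacle is the offset $|\sigma_k(0)|$. Lipschitz continuity alone does not give $\|\sigma_k(x)\|\le L_{\sigma_k}\|x\|$, and the additive term introduces a dependence on $n$ and the widths $d_k$, which the statement's list of dependencies does not mention. I would handle this in one of two ways. Either note that in the common case $\sigma_k(0)=0$ (ReLU, tanh, identity) the bound is simply $B_Z=B_X\prod_k L_{\sigma_k}P_kB_W$, with no dimension dependence. Or keep the affine term and state explicitly that the constants then also depend on $n$ and the layer widths. Everything else is a direct consequence of Theorem~\ref{thm:multilayer_stability}.
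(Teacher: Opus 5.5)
Your proposal is correct and essentially matches the paper's proof. Both use the splitting $ZZ^\top-\widetilde Z\widetilde Z^\top = Z(Z-\widetilde Z)^\top+(Z-\widetilde Z)\widetilde Z^\top$, bound $\|Z-\widetilde Z\|_F$ by Theorem~\ref{thm:multilayer_stability}, and bound $\|Z\|,\|\widetilde Z\|$ independently of $\varepsilon$ via the forward recursion from that theorem's proof, which carries the same $|\sigma_k(0)|\sqrt{n d_{k+1}}$ offset you flag; the only cosmetic difference is that the paper keeps the factor $\|Z\|_2+\|\widetilde Z\|_2$ in both bounds rather than relaxing it to Frobenius norms.
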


Theorem~\ref{thm:gram_stability} upgrades representation stability to Gram-matrix stability in both spectral and Frobenius norms. Since the Gram matrix encodes all pairwise inner products between node embeddings, this implies preservation of distances and class-level statistics such as centroids and covariances in latent space. The proof is given in Appendix~\ref{app:gram_stability}.

The following corollaries formalize this implication.

\begin{corollary}[Pairwise distance preservation]
\label{cor:distance_preservation}
Let $z_i^\top$ and $\widetilde z_i^\top$ denote the $i$th rows of $Z$ and $\widetilde Z$, respectively. Then for all $i,j \in \{1,\dots,n\}$,
\begin{equation}
\bigl|
\|z_i-z_j\|_2^2 - \|\widetilde z_i-\widetilde z_j\|_2^2
\bigr|
\le 4 C_{\mathrm{gram}}\,\varepsilon.
\label{eq:distance_preservation}
\end{equation}
\end{corollary}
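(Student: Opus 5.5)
The plan is to write the squared distance as a linear functional of the Gram matrix and then apply Theorem~\ref{thm:gram_stability}. Set $G := ZZ^\top$ and $\widetilde G := \widetilde Z\widetilde Z^\top$, so that $G_{ij} = z_i^\top z_j$ and $\widetilde G_{ij} = \widetilde z_i^\top \widetilde z_j$. Expanding the square gives
\[
\|z_i - z_j\|_2^2 = G_{ii} + G_{jj} - 2G_{ij},
\]
and the same identity holds for the sparsified embeddings with $\widetilde G$ in place of $G$. Writing $\Delta := G - \widetilde G$, the quantity to bound is therefore
\[
\bigl|\Delta_{ii} + \Delta_{jj} - 2\Delta_{ij}\bigr|.
\]

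There are two ways to control this, and both give the stated constant.

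\emph{Entrywise route.} For any matrix, $|\Delta_{ab}| = |e_a^\top \Delta e_b| \le \|\Delta\|_2$. The triangle inequality then bounds the expression by $(1 + 1 + 2)\|\Delta\|_2 = 4\|\Delta\|_2$.

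\emph{Quadratic-form route.} The expression equals $(e_i - e_j)^\top \Delta (e_i - e_j)$, because $\Delta$ is symmetric (both Gram matrices are). Since $\|e_i - e_j\|_2^2 = 2$, this gives the sharper bound $2\|\Delta\|_2$, which implies the claimed $4\|\Delta\|_2$.

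Finally, I would invoke Theorem~\ref{thm:gram_stability} to get $\|\Delta\|_2 \le C_{\mathrm{gram},2}\,\varepsilon$. This yields the corollary with $C_{\mathrm{gram}} := C_{\mathrm{gram},2}$.

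The same argument also works with $C_{\mathrm{gram},F}$, since $\|\Delta\|_2 \le \|\Delta\|_F$ and each entry $|\Delta_{ab}|$ is at most $\|\Delta\|_F$. So the constant $C_{\mathrm{gram}}$ in the statement can be read as either of the two Gram constants; I would note this explicitly, since the statement does not specify which one is meant.

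There is no real obstacle here. The only point that needs care is this notational ambiguity in $C_{\mathrm{gram}}$. The substantive work, namely propagating the $O(\varepsilon)$ perturbation from the graph operator through the layers to the Gram matrix, is already done in Theorems~\ref{thm:multilayer_stability} and~\ref{thm:gram_stability}.
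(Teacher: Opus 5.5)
Your proposal is correct, and your entrywise route is exactly the paper's argument: expand $\|z_i-z_j\|_2^2 = G_{ii}+G_{jj}-2G_{ij}$, bound each entry of $ZZ^\top-\widetilde Z\widetilde Z^\top$ by its spectral norm, and apply the spectral-norm bound of Theorem~\ref{thm:gram_stability}, so that $C_{\mathrm{gram}}$ is implicitly $C_{\mathrm{gram},2}$. Your alternative via the quadratic form $(e_i-e_j)^\top\Delta(e_i-e_j)$ is also valid and sharpens the constant from $4$ to $2$.
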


Now suppose that the nodes are partitioned into $C$ classes with index sets $S_1,\dots,S_C$, where $n_c := |S_c|$. Define the class mean and empirical covariance for class $c$ by
\begin{equation}
\mu_c := \frac{1}{n_c}\sum_{i\in S_c} z_i, \qquad
\Sigma_c := \frac{1}{n_c}\sum_{i\in S_c}(z_i-\mu_c)(z_i-\mu_c)^\top,
\label{eq:class_cov}
\end{equation}
and define $\widetilde\mu_c,\widetilde\Sigma_c$ analogously using $\widetilde Z$.

\begin{corollary}[Class-statistics stability]
\label{cor:class_stats}
Under the assumptions of Theorem~\ref{thm:multilayer_stability}, for each class $c$,
\begin{equation}
\|\mu_c-\widetilde\mu_c\|_2
\le \frac{1}{\sqrt{n_c}}\|Z-\widetilde Z\|_F,
\label{eq:mean_stability}
\end{equation}
and, if the row norms of $Z$ and $\widetilde Z$ are uniformly bounded by $B_Z$,
\begin{equation}
\|\Sigma_c-\widetilde\Sigma_c\|_2
\le \frac{8B_Z}{\sqrt{n_c}}\|Z-\widetilde Z\|_F.
\label{eq:cov_stability}
\end{equation}
In particular, both quantities are $O(\varepsilon)$.
\end{corollary}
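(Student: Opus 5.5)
The plan is to prove Corollary~\ref{cor:class_stats} by writing class means and covariances as linear and quadratic averages of the rows $z_i$. We then bound them directly in terms of $\|Z-\widetilde Z\|_F$. Theorem~\ref{thm:multilayer_stability} converts each such bound into an $O(\varepsilon)$ statement at the end.

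\textbf{Mean bound.} Write $\mu_c-\widetilde\mu_c = \frac{1}{n_c}\sum_{i\in S_c}(z_i-\widetilde z_i)$. By the triangle inequality followed by Cauchy--Schwarz,
\[
\|\mu_c-\widetilde\mu_c\|_2 \le \frac{1}{n_c}\sum_{i\in S_c}\|z_i-\widetilde z_i\|_2 \le \frac{1}{n_c}\sqrt{n_c}\Bigl(\sum_{i\in S_c}\|z_i-\widetilde z_i\|_2^2\Bigr)^{1/2} \le \frac{1}{\sqrt{n_c}}\|Z-\widetilde Z\|_F,
\]
where the last step enlarges the sum over $S_c$ to all rows. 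This gives \eqref{eq:mean_stability}.

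\textbf{Covariance bound.} The cleanest route is the identity $\Sigma_c = \frac{1}{n_c}\sum_{i\in S_c} z_iz_i^\top - \mu_c\mu_c^\top$. The difference then splits into two terms, each treated with the rank-one perturbation identity $aa^\top-bb^\top = a(a-b)^\top + (a-b)b^\top$, which gives $\|aa^\top-bb^\top\|_2\le(\|a\|_2+\|b\|_2)\|a-b\|_2$.

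For the second-moment term, rows bounded by $B_Z$ give
\[
\Bigl\|\frac{1}{n_c}\sum_{i\in S_c}(z_iz_i^\top-\widetilde z_i\widetilde z_i^\top)\Bigr\|_2 \le \frac{2B_Z}{n_c}\sum_{i\in S_c}\|z_i-\widetilde z_i\|_2 \le \frac{2B_Z}{\sqrt{n_c}}\|Z-\widetilde Z\|_F,
\]
using the same Cauchy--Schwarz step as for the mean. For the mean term, $\|\mu_c\|_2,\|\widetilde\mu_c\|_2\le B_Z$ because each mean is a convex combination of rows. Combined with \eqref{eq:mean_stability}, this gives $\|\mu_c\mu_c^\top-\widetilde\mu_c\widetilde\mu_c^\top\|_2\le \frac{2B_Z}{\sqrt{n_c}}\|Z-\widetilde Z\|_F$. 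Summing the two terms yields the constant $4B_Z$, which is comfortably within the stated $8B_Z$.

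If one prefers to work with the centered vectors $z_i-\mu_c$, those have norm at most $2B_Z$. The analogous argument then produces a constant of the order of the stated $8B_Z$, and also explains its origin.

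\textbf{Conclusion and main obstacle.} Plugging $\|Z-\widetilde Z\|_F\le C_{\rm rep}\varepsilon$ from Theorem~\ref{thm:multilayer_stability} into both bounds gives the $O(\varepsilon)$ claims. The only delicate point is bookkeeping in the covariance decomposition: one must keep the Cauchy--Schwarz step at the level of $\sum_{i\in S_c}\|z_i-\widetilde z_i\|_2$ rather than bounding each row by the full Frobenius norm. The cruder bound would lose the $1/\sqrt{n_c}$ factor, so this is the step to get right.
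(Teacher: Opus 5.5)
Your proof is correct. Your mean bound is the paper's argument, but for the covariance you use a genuinely different decomposition.

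The paper works directly with the centered vectors $a_i=z_i-\mu_c$ and $\widetilde a_i=\widetilde z_i-\widetilde\mu_c$. It applies the rank-one identity to each $a_ia_i^\top-\widetilde a_i\widetilde a_i^\top$, bounds $\|a_i\|_2,\|\widetilde a_i\|_2\le 2B_Z$, and uses $\|a_i-\widetilde a_i\|_2\le\|z_i-\widetilde z_i\|_2+\|\mu_c-\widetilde\mu_c\|_2$. The two resulting $\frac{1}{\sqrt{n_c}}\|Z-\widetilde Z\|_F$ terms, each multiplied by $4B_Z$, give exactly $8B_Z$, so your closing remark correctly identifies where the stated constant comes from.

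Your uncentered split $\Sigma_c=\frac{1}{n_c}\sum_{i\in S_c}z_iz_i^\top-\mu_c\mu_c^\top$ avoids both the factor-two loss in the centered norms and the extra triangle inequality. It therefore yields the sharper constant $4B_Z$, which implies the stated bound.

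What the centered route buys in exchange is translation invariance. Its constant really depends on $\max_{i\in S_c}(\|a_i\|_2+\|\widetilde a_i\|_2)$, which is a within-class radius. That radius can be much smaller than $B_Z$ when classes are tight around centroids far from the origin, as in neural-collapse regimes. Your bound necessarily scales with the absolute row-norm bound $B_Z$. The paper does not exploit this advantage, since it bounds the radius crudely by $2B_Z$, so under the corollary's hypothesis as stated your version is strictly better.

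For the final $O(\varepsilon)$ claim, both arguments implicitly need $B_Z$ to be independent of $\varepsilon$. This holds automatically: the forward bounds in the proof of Theorem~\ref{thm:multilayer_stability} give $\|Z\|_F,\|\widetilde Z\|_F\le B_K'$, so you may take $B_Z=B_K'$.
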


Corollary~\ref{cor:class_stats} controls several representation diagnostics used in practice. Examples include the displacement of class centroids, pairwise distances and angles between centroids, within-class variance $\operatorname{tr}(\Sigma_c)$, leading covariance eigenvalues, between-/within-class scatter ratios, and nearest-centroid margins. These quantities measure whether classes remain separated, compact, and similarly oriented after sparsification. Neural-collapse-style diagnostics are also covered, since they are built from the same objects: class means, within-class covariances, and normalized centroid geometry. On bounded sets, any such diagnostic that is locally Lipschitz as a function of
$
\{\mu_c,\Sigma_c\}_{c=1}^C
$
therefore changes by at most $O(\varepsilon)$ under spectral sparsification.

Proofs of Corollaries~\ref{cor:distance_preservation} and~\ref{cor:class_stats} are collected in Appendix~\ref{app:geometry_corollaries}.

Theorems~\ref{thm:poly_stability}--\ref{thm:gram_stability} establish the implication chain
\[
L \approx \widetilde L
\;\Longrightarrow\;
p(L) \approx p(\widetilde L)
\;\Longrightarrow\;
Z \approx \widetilde Z
\;\Longrightarrow\;
ZZ^\top \approx \widetilde Z\widetilde Z^\top.
\]

\subsection{Training Dynamics}
We also study how sparsification perturbs training trajectories. In a one-layer model
\[
f(W;L)=\sigma\!\bigl(p(L)XW\bigr), \qquad
J(W;L)=\tfrac12\|f(W;L)-Y\|_F^2,
\]

In addition to (A1)--(A5), we make the following standard assumptions:

\textbf{(A6) Smooth activation.}
$\sigma \in C^1(\mathbb{R})$, $\sigma'$ is bounded, and $\sigma'$ is Lipschitz.

\textbf{(A7) Bounded iterates.}
The gradient descent iterates remain in a bounded set:
\[
\|W_t\|_2,\|\widetilde W_t\|_2 \le R
\qquad\text{for all } t\le T.
\]
\textbf[(A8)] \textbf{Bounded targets.}
The target matrix satisfies $\|Y\|_F \le B_Y$.

Let
\[
W_{t+1}=W_t-\eta \nabla_W J(W_t;L), \qquad
\widetilde W_{t+1}=\widetilde W_t-\eta \nabla_W J(\widetilde W_t;\widetilde L),
\]
with common initialization $W_0=\widetilde W_0$.

\begin{theorem}[One-layer training stability]
\label{thm:training_stability}
Under these assumptions, with constants allowed to depend on $B_Y$, there exist constants $L_W,L_S>0$ such that for all $t\le T$,
\begin{equation}
\label{eq:theorem_training_one_layer_main}
\|W_{t+1}-\widetilde W_{t+1}\|_F
\le
(1+\eta L_W)\|W_t-\widetilde W_t\|_F
+\eta L_S\|p(L)-p(\widetilde L)\|_2.
\end{equation}

Consequently, by Theorem~\ref{thm:poly_stability},
\begin{equation}
\|W_t-\widetilde W_t\|_F
\le
C_{\mathrm{train}}\varepsilon \sum_{j=0}^{t-1}(1+\eta L_W)^j,
\end{equation}
for a constant $C_{\mathrm{train}}$ independent of $\varepsilon$.
\end{theorem}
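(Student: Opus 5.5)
The plan is to write the gradient explicitly and split the gradient difference into a ``same operator, different weights'' term and a ``same weights, different operator'' term. Set $P:=p(L)$ and $\widetilde P:=p(\widetilde L)$, and write $A(W;P):=PXW$ for the pre-activation. The chain rule gives
\[
\nabla_W J(W;P) = (PX)^\top\bigl[(\sigma(PXW)-Y)\odot \sigma'(PXW)\bigr] =: G(W;P).
\]
Subtracting the two gradient-descent updates and inserting the cross term $G(\widetilde W_t;P)$ gives
\[
\|W_{t+1}-\widetilde W_{t+1}\|_F \le \|W_t-\widetilde W_t\|_F + \eta\|G(W_t;P)-G(\widetilde W_t;P)\|_F + \eta\|G(\widetilde W_t;P)-G(\widetilde W_t;\widetilde P)\|_F .
\]
The recursion \eqref{eq:theorem_training_one_layer_main} then follows once I prove two bounds:
\[
\|G(W;P)-G(W';P)\|_F\le L_W\|W-W'\|_F,
\qquad
\|G(W;P)-G(W;\widetilde P)\|_F\le L_S\|P-\widetilde P\|_2,
\]
uniformly over $\|W\|_2,\|W'\|_2\le R$. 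Assumption (A7) is what makes this uniformity legitimate.

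\textbf{Uniform bounds.} Both Lipschitz bounds rest on a few uniform bounds, which I collect first.
\begin{itemize}
\item By (A5), $\|P\|_2,\|\widetilde P\|_2\le B_P:=\sum_r|a_r|B_L^r$.
\item By (A4), $\|PX\|_F\le B_PB_X$.
\item By (A7), $\|PXW\|_F\le B_PB_XR$.
\item By (A2), $|\sigma(x)|\le|\sigma(0)|+L_\sigma|x|$, so $\|\sigma(PXW)\|_F\le \sqrt{nd}\,|\sigma(0)|+L_\sigma B_PB_XR$. Together with (A8) this bounds the residual $E:=\sigma(PXW)-Y$ in Frobenius norm by a constant $B_E$.
\item By (A6), $\|\sigma'\|_\infty\le B_{\sigma'}$, and $\sigma'$ is $L_{\sigma'}$-Lipschitz.
\end{itemize}

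\textbf{Product-rule telescoping.} Write $G=U^\top(E\odot D)$ with $U=PX$, $E$ the residual above, and $D=\sigma'(PXW)$. A difference of such triple products telescopes into three terms, each perturbing one factor:
\[
\Delta G = (\Delta U)^\top(E\odot D) + U^\top(\Delta E\odot D) + U^\top(E\odot \Delta D).
\]
The entrywise bounds I use are $\|A\odot B\|_F\le\|A\|_F\max|B_{ij}|$ and $\|A\odot B\|_F\le\|A\|_F\|B\|_F$. The perturbations of the factors are controlled as follows.
\begin{itemize}
\item $\|\Delta E\|_F\le L_\sigma\|\Delta A\|_F$.
\item $\|\Delta D\|_F\le L_{\sigma'}\|\Delta A\|_F$.
\item For the weight perturbation, $U$ is unchanged and $\|\Delta A\|_F\le\|PX\|_2\|W-W'\|_F$.
\item For the operator perturbation, $\|\Delta U\|_F\le\|P-\widetilde P\|_2B_X$ and $\|\Delta A\|_F\le\|P-\widetilde P\|_2B_XR$.
\end{itemize}
Collecting terms gives explicit $L_W$ and $L_S$ depending on $B_P,B_X,R,B_Y,L_\sigma,L_{\sigma'},B_{\sigma'}$.

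\textbf{Main obstacle.} I expect the main obstacle to be the Hadamard factor $E\odot\Delta D$. Bounding it needs $\max|E_{ij}|$ or a Frobenius-times-Frobenius estimate, and the latter produces a product $\|E\|_F\|\Delta A\|_F$ that is still linear in $\Delta A$. So it is harmless, but it forces constants to depend on $B_E$, hence on $B_Y$ and possibly on $n$ through $\sigma(0)$. I would absorb this by allowing constants to depend on $B_Y$, as the statement permits, or by assuming $\sigma(0)=0$.

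\textbf{Unrolling.} With the recursion in hand, apply Theorem~\ref{thm:poly_stability} to get $\|P-\widetilde P\|_2\le C_p\varepsilon$. Then unroll from $\delta_0=0$ by induction, using $\delta_{t+1}\le(1+\eta L_W)\delta_t+\eta L_SC_p\varepsilon$. This yields
\[
\delta_t\le \eta L_SC_p\varepsilon\sum_{j=0}^{t-1}(1+\eta L_W)^j,
\]
so $C_{\mathrm{train}}=\eta L_SC_p$.
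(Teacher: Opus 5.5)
Your proposal is correct and follows the paper's proof. It uses the same insertion of the cross term $\nabla_W J(\widetilde W_t;L)$, the same two Lipschitz bounds (in $W$ and in $p(L)$) on the region given by (A7), and the same unrolling from $W_0=\widetilde W_0$ with $C_{\mathrm{train}}=\eta L_S C_p$; the only difference is that your product-rule telescoping of $U^\top(E\odot D)$ makes $L_W$ and $L_S$ explicit, where the paper simply appeals to local Lipschitz continuity of compositions on bounded sets.
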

Theorem~\ref{thm:training_stability} shows that sparsification perturbs the gradient field only slightly, and hence yields finite-time stability of gradient descent trajectories in the one-layer setting. The full proof is deferred to Appendix~\ref{app:training_stability}.

\paragraph{Multilayer training dynamics.}
The one-layer result extends to full-batch multilayer gradient descent on bounded finite-time trajectories under analogous smoothness and boundedness assumptions. Let
\[
\Theta:=(W_0,\dots,W_{K-1}), \qquad
F(\Theta;L):=H^{(K)},
\]
where
\[
H^{(0)}=X, \qquad
H^{(k+1)}=\sigma_k\!\bigl(p_k(L)H^{(k)}W_k\bigr),
\]
and let
\[
J(\Theta;L):=\tfrac12\|F(\Theta;L)-Y\|_F^2.
\]
In addition to (A1)--(A5) and (A8), we make the following standard assumptions:
\begin{itemize}
    \item[(A6$'$)] \textbf{Smooth activations.}
    For each $k$, $\sigma_k \in C^1(\mathbb{R})$, $\sigma_k'$ is bounded, and $\sigma_k'$ is Lipschitz.
    \item[(A7$'$)] \textbf{Bounded iterates and hidden states.}
    There exist constants $R,B_H>0$ such that for all $t\le T$ and all $k$,
    \[
    \|W_k^{\,t}\|_2,\ \|\widetilde W_k^{\,t}\|_2 \le R,
    \qquad
    \|H^{(k),t}\|_F,\ \|\widetilde H^{(k),t}\|_F \le B_H.
    \]
\end{itemize}
Let $\Theta^t,\widetilde\Theta^t$ be the corresponding gradient descent iterates with common initialization.

\begin{proposition}[Multilayer training stability]
\label{prop:multilayer_training_stability}
Under these assumptions, there exist constants $L_\Theta,L_S>0$ such that for all $t\le T$,
\begin{equation}
\label{eq:theorem_training_multi_layer_main}
\|\Theta^{t+1}-\widetilde\Theta^{t+1}\|_F
\le
(1+\eta L_\Theta)\|\Theta^t-\widetilde\Theta^t\|_F
+
\eta L_S \sum_{k=0}^{K-1}\|p_k(L)-p_k(\widetilde L)\|_2.
\end{equation}

Consequently, by Theorem~\ref{thm:poly_stability},
\[
\|\Theta^t-\widetilde\Theta^t\|_F
\le
\eta L_S
\left(\sum_{k=0}^{K-1} C_{p_k}\right)
\varepsilon
\sum_{j=0}^{t-1}(1+\eta L_\Theta)^j
\le
C_{\mathrm{train}}^{(K)}\varepsilon
\sum_{j=0}^{t-1}(1+\eta L_\Theta)^j,
\]
where $C_{\mathrm{train}}^{(K)}$ is independent of $\varepsilon$.

\end{proposition}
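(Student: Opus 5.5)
The plan is to derive \eqref{eq:theorem_training_multi_layer_main} from a perturbation bound on the gradient map. For $\Theta,\widetilde\Theta$ in the bounded region of (A7$'$), we show
\[
\|\nabla_\Theta J(\Theta;L)-\nabla_\Theta J(\widetilde\Theta;\widetilde L)\|_F
\le L_\Theta\|\Theta-\widetilde\Theta\|_F + L_S\sum_{k=0}^{K-1}\|p_k(L)-p_k(\widetilde L)\|_2 .
\]
We split this via the triangle inequality into a \emph{parameter term} $\|\nabla J(\Theta;L)-\nabla J(\widetilde\Theta;L)\|_F$ and an \emph{operator term} $\|\nabla J(\widetilde\Theta;L)-\nabla J(\widetilde\Theta;\widetilde L)\|_F$. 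Given this bound, the GD update yields
\[
\|\Theta^{t+1}-\widetilde\Theta^{t+1}\|_F\le\|\Theta^t-\widetilde\Theta^t\|_F+\eta\|\nabla J(\Theta^t;L)-\nabla J(\widetilde\Theta^t;\widetilde L)\|_F,
\]
which is exactly the one-step inequality. Unrolling it from $\Theta^0=\widetilde\Theta^0$ gives a geometric sum. Substituting Theorem~\ref{thm:poly_stability}, $\|p_k(L)-p_k(\widetilde L)\|_2\le C_{p_k}\varepsilon$, then produces the stated consequence with $C^{(K)}_{\mathrm{train}}=\eta L_S\sum_k C_{p_k}$.

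To bound the gradient we write it explicitly by backpropagation. Let $A_k=p_k(L)$, $U^{(k)}=A_kH^{(k)}W_k$, $H^{(k+1)}=\sigma_k(U^{(k)})$, and $R=H^{(K)}-Y$. Define backward signals $G^{(K)}=R$ and $\Delta^{(k)}=G^{(k+1)}\odot\sigma_k'(U^{(k)})$, with $G^{(k)}=A_k^\top\Delta^{(k)}W_k^\top$. Then
\[
\nabla_{W_k}J=(A_kH^{(k)})^\top\Delta^{(k)}.
\]
Every factor here is a finite product or Hadamard product of the quantities $A_k$, $H^{(k)}$, $W_k$, $\sigma_k'(U^{(k)})$, and $R$. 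Each of these is uniformly bounded: $\|A_k\|_2\le\sum_r|a_{k,r}|B_L^r$ by (A5) for both $L$ and $\widetilde L$, $\|W_k\|_2\le R$ and $\|H^{(k)}\|_F\le B_H$ by (A7$'$), $\sigma_k'$ is bounded by (A6$'$), and $\|R\|_F\le B_H+B_Y$ by (A8). So it suffices to show that each factor is Lipschitz jointly in $(\Theta,(A_k)_k)$ on this bounded set. The gradient difference then follows from the product rule $\|ab-\tilde a\tilde b\|\le\|a-\tilde a\|\|b\|+\|\tilde a\|\|b-\tilde b\|$, applied with mixed operator/Frobenius norms (using $\|MN\|_F\le\|M\|_2\|N\|_F$ and $\|M\odot N\|_F\le\|M\|_{\max}\|N\|_F$, where $\|\sigma_k'\|_\infty$ bounds the max-entry norm).

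The forward Lipschitz estimate is by induction on the layer, as in the proof of Theorem~\ref{thm:multilayer_stability}, but now perturbing both weights and operators:
\[
\|H^{(k+1)}-\widetilde H^{(k+1)}\|_F\le L_{\sigma_k}\bigl(\|A_k-\widetilde A_k\|_2B_HR+\|\widetilde A_k\|_2\|H^{(k)}-\widetilde H^{(k)}\|_F R+\|\widetilde A_k\|_2B_H\|W_k-\widetilde W_k\|_2\bigr).
\]
The same bound holds for $U^{(k)}$ without the factor $L_{\sigma_k}$. Because $\sigma_k'$ is Lipschitz by (A6$'$), $\|\sigma_k'(U^{(k)})-\sigma_k'(\widetilde U^{(k)})\|_F$ inherits this control. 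The backward recursion for $G^{(k)}$ and $\Delta^{(k)}$ is then handled by a downward induction from $k=K$, using the same product rule.

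The main obstacle I expect is keeping the backward signals uniformly bounded. Assumption (A7$'$) bounds the hidden states directly, but $G^{(k)}$ and $\Delta^{(k)}$ have to be bounded through the recursion using $\|\sigma_k'\|_\infty$, $\|A_k\|_2$ and $R$. One also has to bound the Hadamard-product differences carefully in Frobenius norm. In particular, we use $\|(\sigma'(U)-\sigma'(\widetilde U))\odot G\|_F\le\|\sigma'(U)-\sigma'(\widetilde U)\|_F\|G\|_F$, which is valid since the entrywise product satisfies $\|M\odot N\|_F\le\|M\|_F\|N\|_F$. All constants obtained this way depend only on $K$, the filter coefficients, $B_L,B_H,B_Y,R$, and the activation constants, and not on $\varepsilon$, as required. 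Finally, $\|W_k-\widetilde W_k\|_2\le\|\Theta-\widetilde\Theta\|_F$, which converts the parameter dependence into the Frobenius norm of $\Theta$.
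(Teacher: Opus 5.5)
Your proposal is correct and follows the paper's proof. The structure is the same: you split the gradient difference into a parameter term and a filter term, use the same one-step recursion unrolled from $\Theta^0=\widetilde\Theta^0$, and obtain the same constant $C^{(K)}_{\mathrm{train}}=\eta L_S\sum_k C_{p_k}$. Your explicit forward/backward product-rule inductions make quantitative the two Lipschitz lemmas that the paper only asserts from the backpropagation formulas on bounded sets. One small point: the cross-evaluated hidden states of $\widetilde\Theta^t$ on $L$ are not bounded directly by (A7$'$). They are bounded instead by the forward estimate in the proof of Theorem~\ref{thm:multilayer_stability}, with $B_W$ replaced by $R$.
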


The proof (see \ref{app:multilayer_training_stability}) follows the same strategy as Theorem~\ref{thm:training_stability}: one shows that the full gradient field is Lipschitz in the parameters and Lipschitz in the collection of graph filters $\{p_k(L)\}_{k=0}^{K-1}$ on bounded sets, and then combines these bounds with Theorem~\ref{thm:poly_stability}.



\section{Experiments}
\label{sec:experiments}

We evaluate the central claim of the paper: in practically relevant sparsification regimes, principled graph sparsification preserves not only graph operators, but also the geometry of learned hidden representations. Our experiments combine controlled synthetic tests, which directly probe the perturbation predictions of Section~\ref{sec:main_results}, with practical representation-learning tests on real datasets.

\subsection{Setup}
\label{subsec:exp_setup_main}

We evaluate sparsification on synthetic and real graph datasets. The synthetic experiments use weighted stochastic block models (SBMs) and geometric $k$-NN graphs constructed to exhibit clear community or manifold structure. The real-data experiments use FashionMNIST \citep{xiao2017fashion}, Cora \citep{sen2008collective}, and Paul15 \citep{paul2015transcriptional}, a single-cell RNA-seq dataset profiling myeloid progenitor cells. Node features are either class-structured synthetic features, for controlled experiments, or the standard features provided with each dataset; details are given in Appendix~\ref{app:datasets}.

For the theory-aligned experiments in Figures~\ref{fig:theory_main} and~\ref{fig:training_dynamics_main}, we use deterministic GNN models to isolate the effect of graph sparsification. Specifically, we use a polynomial graph filter followed by two linear feature-mixing layers for forward-map evaluation, and one- and two-layer deterministic GNNs for training-dynamics experiments. Dense and sparsified models have matched initialization.

Sparsification is performed using effective-resistance sampling. Effective-resistance sparsification samples edges with probability proportional to their leverage scores $w_e R_e$, where $w_e$ is the edge weight and $R_e$ is the effective resistance between the endpoints of edge $e$, and then reweights sampled edges so that the sparse Laplacian remains a spectrally accurate approximation of the original Laplacian~\citep{spielman2011graph,spielman2011spectral}.
This choice provides a concrete testbed for the theory, but our results apply to any method producing a graph $\widetilde L$ satisfying the spectral comparison~\eqref{eq:spectral_sparsifier}. The sparsification level is controlled by the edge-sampling budget, with larger budgets producing denser, lower-distortion sparsifiers. Full architectural, optimization, and sparsification details are given in Appendices~\ref{app:impl} and~\ref{app:sparsifier}.

\subsection{Validation of representation stability}
\label{subsec:theory_validation_main}

\begin{table}[htbp!]
\centering
\setlength{\tabcolsep}{3pt}
\renewcommand{\arraystretch}{0.9}
\footnotesize
\begin{minipage}{0.47\linewidth}
\centering
\begin{tabular}{lcccccc}
\toprule
\multicolumn{7}{c}{Geometric} \\
\midrule
$\epsilon_{\mathrm{emp}}$ & 0.37 & 0.38 & 0.41 & 0.54 & 0.66 & 0.75 \\
frac.                     & 0.90 & 0.87 & 0.87 & 0.70 & 0.58 & 0.53 \\
\bottomrule
\end{tabular}
\end{minipage}
\hfill
\begin{minipage}{0.47\linewidth}
\centering
\begin{tabular}{lcccccc}
\toprule
\multicolumn{7}{c}{SBM} \\
\midrule
$\epsilon_{\mathrm{emp}}$ & 0.41 & 0.42 & 0.43 & 0.43 & 0.56 & 0.71 \\
frac.                     & 0.96 & 0.96 & 0.92 & 0.89 & 0.77 & 0.66 \\
\bottomrule
\end{tabular}
\end{minipage}
\caption{Mean empirical spectral distortion $\epsilon_{\mathrm{emp}}$ and retained edge fraction (frac.) for the synthetic sparsification experiments, averaged over $5$ sparsifier draws.}
\label{tab:eps_edgefrac_sidebyside}
\end{table}

\begin{figure}[htbp!]
    \centering
    \includegraphics[width=\linewidth]{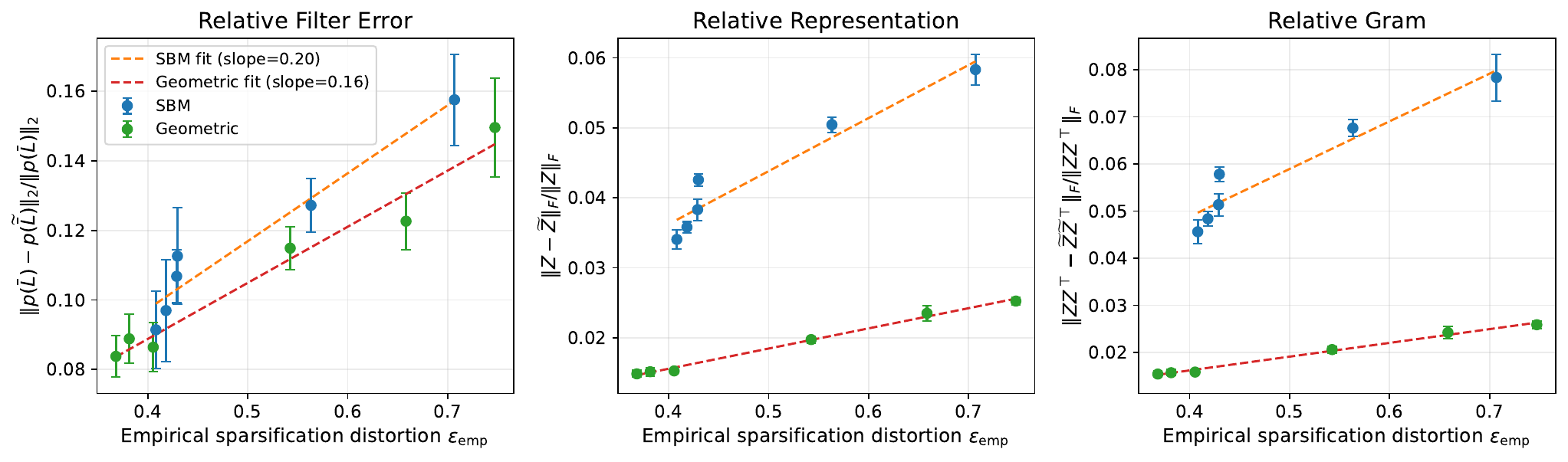}
    \caption{
\textbf{Controlled validation of representation stability.}
We measure relative polynomial-filter error, relative hidden-representation error, and relative hidden-Gram error for effective resistance sparsified weighted SBM and geometric weighted \(k\)-NN graphs. The graph operator is the scaled combinatorial Laplacian \(\bar L=L/\|L\|_2\), and the polynomial filter is \(p(\bar L)=I-0.6\bar L+0.15\bar L^2\). Across both graph families, all three quantities increase smoothly with \(\epsilon_{\mathrm{emp}}\), consistent with the stability predictions of Theorems~\ref{thm:poly_stability}--\ref{thm:gram_stability}. Markers show means over five sparsifier draws at each selected budget, and vertical error bars denote one standard deviation.
}
    \label{fig:theory_main}
\end{figure}

On weighted SBM and geometric graph families, we construct effective-resistance sparsifiers with varying degrees of sparsification, controlled by varying the sampling budget $q$. For synthetic graphs, we measure empirical spectral distortion $\epsilon_{\mathrm{emp}}$ from the generalized eigenvalue envelope of the dense and sparse Laplacians. If $\lambda_{\min}$ and $\lambda_{\max}$ are the extremal generalized eigenvalues of $(\widetilde L,L)$ on the nonconstant subspace, we compute
\[
\epsilon_{\mathrm{emp}}=\max\{1-\lambda_{\min},\lambda_{\max}-1\}.
\]
Table~\ref{tab:eps_edgefrac_sidebyside} reports the corresponding fraction of retained edges.

We then evaluate the quantities appearing in the theory: relative polynomial-filter error, relative hidden-representation error, and relative hidden-Gram error. The synthetic forward map uses the scaled Laplacian graph operator $\bar L=L/\|L\|_2$ and polynomial filter
\[
p(\bar L)=I-0.6\bar L+0.15\bar L^2.
\]
Since the Gram matrix encodes pairwise inner products, stability of $ZZ^\top$ directly reflects preservation of embedding geometry.

Figure~\ref{fig:theory_main} validates the perturbation chain of Section~\ref{sec:main_results}. Across both graph families, all three quantities increase smoothly and approximately linearly with $\epsilon_{\mathrm{emp}}$, consistent with the $O(\epsilon)$ scaling predicted by Theorems~\ref{thm:poly_stability}--\ref{thm:gram_stability}. The relative representation and Gram distortions remain small $(0.03-0.08)$ across substantial distortion regimes where only roughly half the edges remain ($\epsilon_{\mathrm{emp}} \sim 0.7 $), the relative representation and Gram distortions
remain modest. We interpret these large-$\epsilon_{\rm emp}$ points as empirical stress
tests rather than as a regime in which the perturbative constants are expected to be sharp..

\begin{figure}[htbp!]
    \centering
    \includegraphics[width=\linewidth]{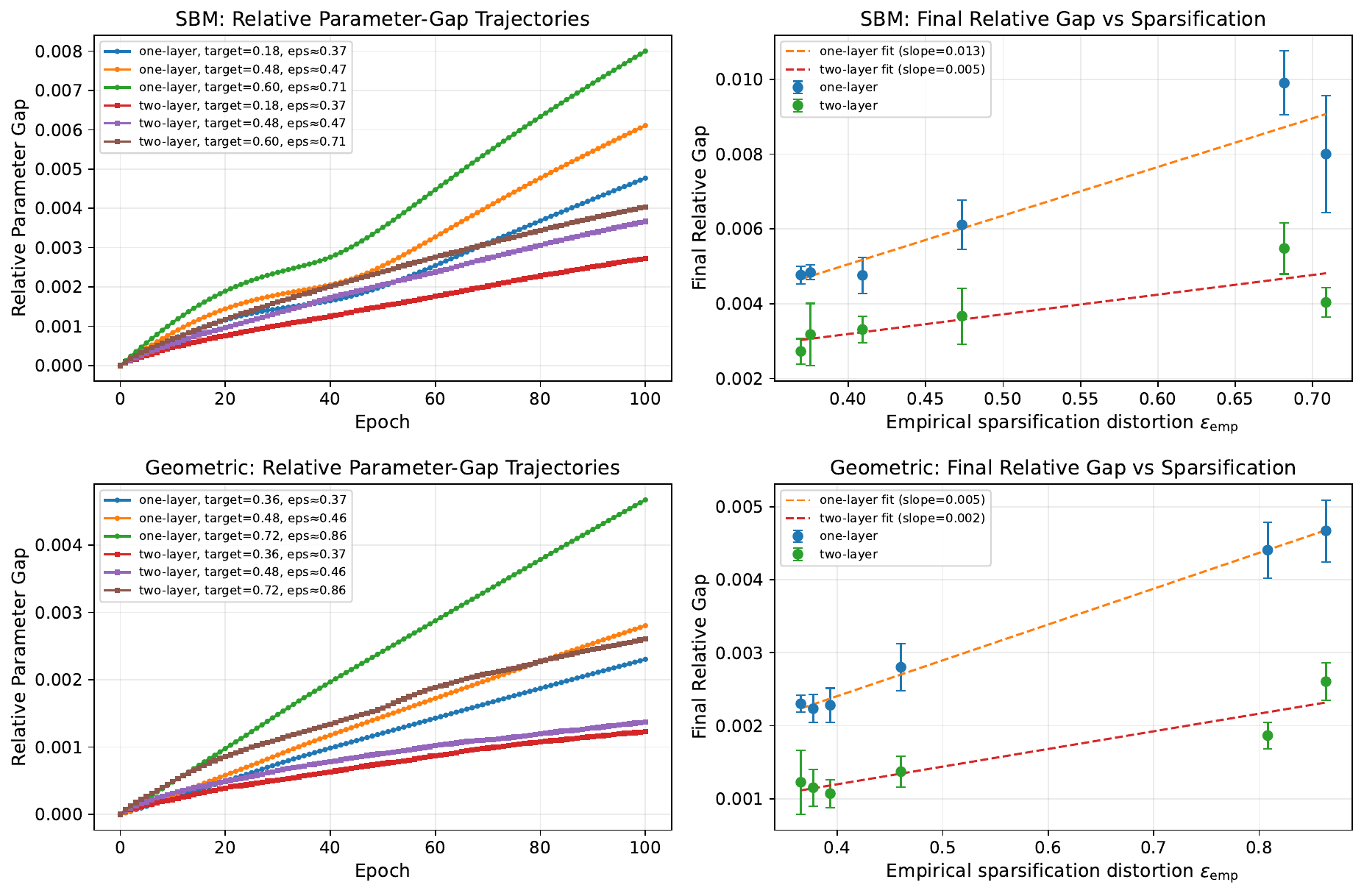}
\caption{
\textbf{Training-trajectory perturbation under graph sparsification.}
Relative parameter-gap trajectories between models trained on the dense graph and on effective-resistance sparsifiers for one-layer and two-layer deterministic polynomial-filter GNNs on SBM and geometric graph families. Both experiments use the same scaled polynomial-Laplacian graph shift as Figure~\ref{fig:theory_main}. Left: parameter-gap trajectories over training epochs for representative empirical distortion levels. Right: final relative parameter gap versus empirical sparsification distortion \(\epsilon_{\mathrm{emp}}\). Markers show means over five sparsifier draws at fixed target distortion, vertical error bars denote one standard deviation. The monotone increase with \(\epsilon_{\mathrm{emp}}\) is consistent with the finite-time perturbation picture of Theorem~\ref{thm:training_stability} and Proposition~\ref{prop:multilayer_training_stability}. The relative parameter gap is a trajectory-level diagnostic in parameter space and should not be interpreted as graph or representation distortion.
}
    \label{fig:training_dynamics_main}
\end{figure}

\subsection{Training dynamics under graph sparsification}
\label{subsec:training_dynamics_main}

We test whether the finite-time perturbation behavior predicted by
Theorem~\ref{thm:training_stability} and
Proposition~\ref{prop:multilayer_training_stability} is visible in controlled training experiments.
For each dense graph and sparsifier, we train dense-graph and sparse-graph models from the same initialization and compare their parameter trajectories over time. The relative parameter gap is
\[
\frac{\|\Theta^t-\widetilde\Theta^t\|_F}{\|\Theta^t\|_F},
\qquad
\|\Theta\|_F^2=\sum_k \|W_k\|_F^2,
\]
with the analogous one-layer definition when there is only one weight matrix. This is a trajectory-level diagnostic in parameter space. It should not be interpreted as graph or representation distortion.

Figure~\ref{fig:training_dynamics_main} shows that, across SBM and geometric graph families, the relative parameter gap grows smoothly over training and is consistently larger for sparsifiers with larger empirical spectral distortion \(\epsilon_{\mathrm{emp}}\). The final relative gap also increases approximately linearly with \(\epsilon_{\mathrm{emp}}\). This behavior is consistent with the recurrence in Theorem~\ref{thm:training_stability}: sparsification introduces an \(O(\epsilon)\) perturbation to the gradient field at each step, and these perturbations accumulate over finite training time.

In this corrected polynomial-Laplacian setup, the absolute parameter gaps remain small. Thus, Figure~\ref{fig:training_dynamics_main} should be read as qualitative evidence for the predicted monotone finite-time dependence on sparsification distortion, while the main representation-level validation is provided by Figure~\ref{fig:theory_main} and the real-data geometry experiments below.

\subsection{Hidden-geometry preservation on real data}

\begin{figure}[htbp!]
    \centering
    \includegraphics[width=\linewidth]{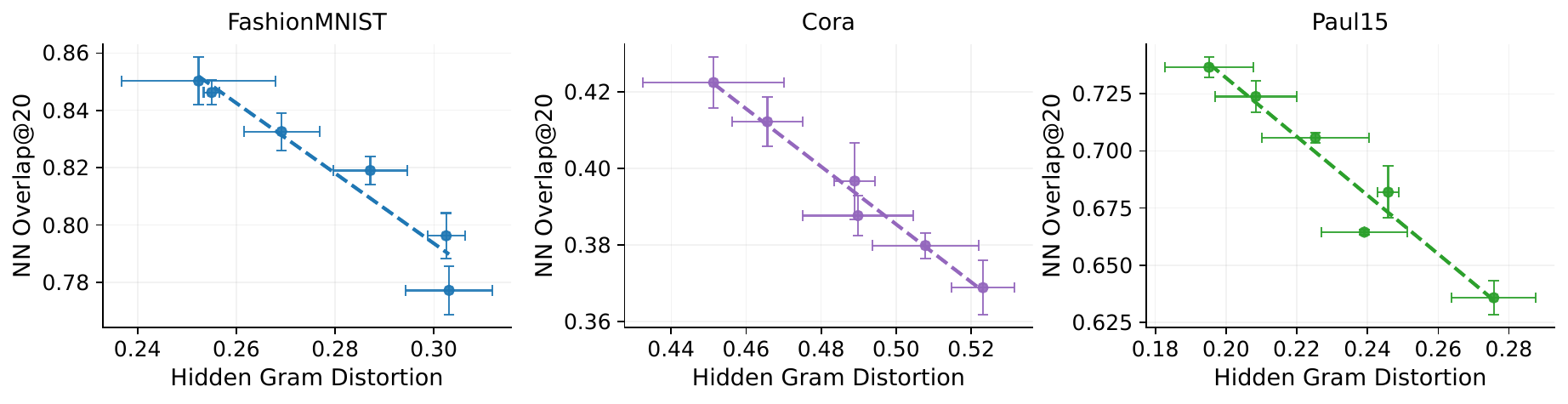}
    \caption{\textbf{Hidden Gram preservation predicts neighborhood preservation on real data.}
For each sparsification budget and dataset, we repeat the sparsifier with three random seeds and compute both hidden Gram distortion and mean $20$-NN overlap on held-out test nodes. Horizontal error bars show one standard deviation of hidden Gram distortion, and vertical error bars show one standard deviation of $20$-NN overlap. Across FashionMNIST, Cora, and Paul15, worse hidden Gram distortion is associated with worse neighborhood preservation, suggesting geometry preservation is a useful representation-fidelity metric.}
    \label{fig:hidden_to_nn}
\end{figure}

Our main practical question is whether preserving hidden Gram geometry also preserves useful local and class-level structure in representation space. Since the Gram matrix controls pairwise inner products, Theorem~\ref{thm:gram_stability} and Corollary~\ref{cor:distance_preservation} predict stability of pairwise embedding geometry. Corollary~\ref{cor:class_stats} further predicts stability of coarse class statistics such as centroids and within-class spread. 
For real datasets, we therefore focus directly on representation-level distortion, using
hidden Gram distortion as the geometry-preservation metric.

Figure~\ref{fig:hidden_to_nn} tests the local-structure side of this prediction. The $20$-NN overlap measures the fraction of each node's nearest neighbors in the dense embedding that remain nearest neighbors after sparsification, averaged over held-out nodes. Across FashionMNIST, Cora, and Paul15, larger hidden Gram distortion is consistently associated with lower $20$-NN overlap. The relationship is strongly monotone on all three datasets, with Spearman correlations $-0.92$, $-0.90$, and $-0.92$, respectively. Because $20$-NN overlap measures preservation of each node's local embedding neighborhood, these results show that Gram distortion tracks a practically meaningful notion of embedding fidelity.

Figure~\ref{fig:centroids_main} tests the class-level side of the same picture. For each dataset, we fit a two-dimensional PCA projection on the dense embedding and project both dense and sparse class centroids into that same basis. Across all three datasets, sparse centroids remain close to their dense counterparts, indicating that sparsification preserves coarse class organization in addition to local neighborhoods.

Appendix~\ref{app:extra_figures} provides an additional Procrustes-aligned embedding visualization, and Appendix~\ref{app:teacher_student} evaluates another downstream diagnostic for representation geometry preservation: a knowledge-distillation task in which a dense-graph teacher provides soft labels for a student trained on the sparsified graph.

\begin{figure}[t]
    \centering
    \includegraphics[width=\linewidth]{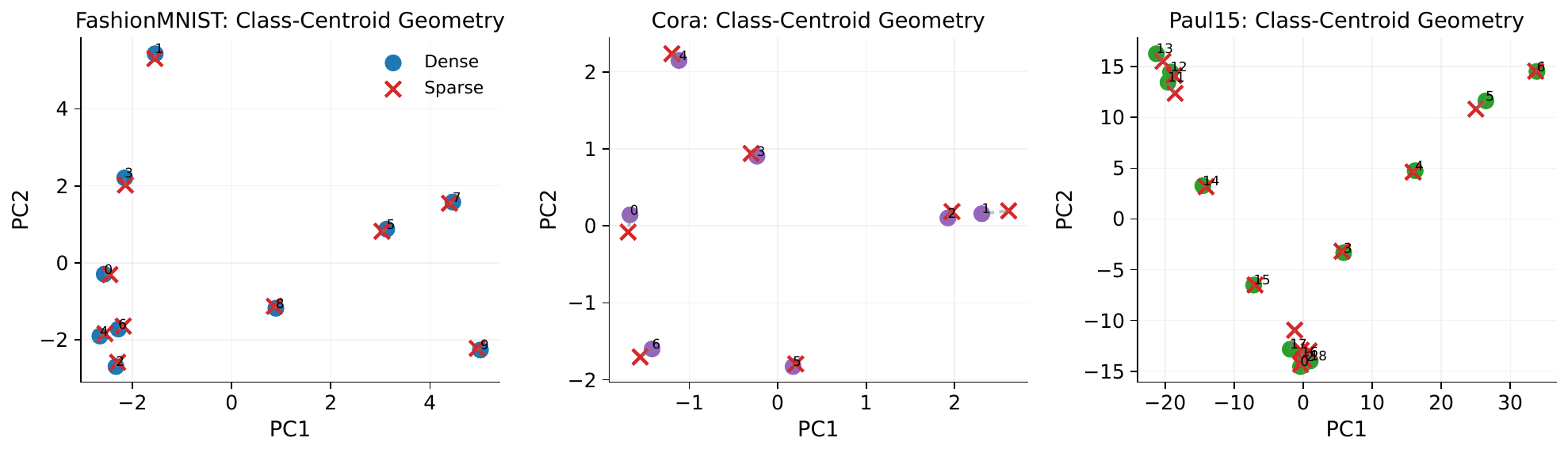}
    \caption{\textbf{Class-centroid geometry under sparsification.}
    Within each dataset, both dense and sparse centroids are projected into the same PCA basis.
    Small centroid displacements indicate that sparsification preserves class geometry, consistent with the class-statistics stability in Corollary~\ref{cor:class_stats}.}
    \label{fig:centroids_main}
\end{figure}

\section{Conclusion}

We studied graph sparsification from a representation-geometric perspective. We proved that spectral sparsification preserves not only polynomial graph filters, but also multilayer hidden embeddings and their Gram matrices. These bounds imply stability of pairwise distances, class means, and covariance structure in embedding space. We also studied the training dynamics and established finite-time stability of gradient descent trajectories in controlled one-layer and multilayer settings.
Empirically, we showed that representation and training perturbations scale smoothly with empirical spectral distortion on synthetic graphs, and that hidden Gram preservation predicts neighborhood and class-geometry preservation on real datasets. Across our datasets, hidden Gram matrices, embedding-geometry diagnostics, and finite-time training trajectories exhibited substantially smaller deviations than the underlying graph-level spectral distortion. These results suggest that sparsification should be evaluated not only by efficiency or predictive accuracy, but also by the geometry it preserves. This points toward a representation-aware view of graph compression for interpretable graph learning.
\newline
\textbf{Limitations:}
Our results assume that the sparsified graph satisfies an $\epsilon$-spectral approximation, which, despite being widely used, may hold only approximately in practical implementations. The analysis also relies on boundedness, Lipschitz, and smoothness assumptions, and the training-dynamics guarantees apply to controlled finite-time full-batch settings. While we have small-scale experimental validation, larger-scale benchmarks, and more aggressive compression regimes remain important directions for future evaluations. Finally, while we show that geometric quantities are preserved and relate them to several downstream diagnostics, a deeper understanding of how representation geometry controls generalization remains an important direction for future work.
\newline
\textbf{Broader impact:}
Geometric representation preserving sparsification may reduce the computational cost of graph learning and make large-scale graph models more accessible. However, sparsification may disproportionately affect low-degree, peripheral, or minority nodes, potentially degrading performance for underrepresented groups in sensitive applications. Spectral or representation-level guarantees do not by themselves ensure fairness, privacy, or robustness.

\paragraph{Code availability.}
Code is available at \url{github.com/chimeraki/Spectral-Sparsification-Preserves-Representation-Geometry}.


\bibliography{references}
\bibliographystyle{plainnat}

\appendix

\section{Proofs}
\label{app:proofs}

This appendix contains full proofs of the results stated in Section~\ref{sec:main_results}. We retain the notation introduced there.

\subsection{Auxiliary bounds}
\label{app:auxiliary}

We first record a standard identity for matrix powers.

\begin{lemma}[Telescoping identity]
\label{lem:telescoping}
For any square matrices $A,B$ and any integer $r\ge 1$,
\[
A^r - B^r = \sum_{j=0}^{r-1} A^{r-1-j}(A-B)B^j.
\]
\end{lemma}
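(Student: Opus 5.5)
The plan is induction on $r$. The cleanest route is to verify that the right-hand side telescopes. Writing $(A-B)B^j = AB^j - B^{j+1}$ inside each summand gives
\[
\sum_{j=0}^{r-1} A^{r-1-j}(A-B)B^j \;=\; \sum_{j=0}^{r-1}\bigl(A^{r-j}B^j - A^{r-1-j}B^{j+1}\bigr).
\]
Set $T_j := A^{r-j}B^j$ for $j=0,\dots,r$. The $j$th summand is then exactly $T_j - T_{j+1}$, since $A^{r-1-j}B^{j+1} = A^{r-(j+1)}B^{j+1}$. The sum therefore collapses to $T_0 - T_r = A^r - B^r$, which is the claim.

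The only point requiring care is that $A$ and $B$ need not commute. The argument never reorders factors. It only uses associativity of matrix multiplication and the distributive law, applied to left-multiplication by $A^{r-1-j}$ and right-multiplication by $B^j$. It also uses $A^0 = B^0 = I$, which handles the endpoint terms $j=0$ and $j=r-1$.

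As a sanity check, I would record the equivalent inductive form. The case $r=1$ is trivial. For the inductive step, use $A^{r+1}-B^{r+1} = A(A^r - B^r) + (A-B)B^r$, substitute the inductive hypothesis into the first term, and reindex. This yields the formula at $r+1$.

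There is no real obstacle here. The only thing to watch is keeping $A$ on the left and $B$ on the right in every term. That ordering is precisely what later allows the bound
\[
\|A^r-B^r\|_2 \le r\,\max(\|A\|_2,\|B\|_2)^{r-1}\,\|A-B\|_2,
\]
which is used in the proof of Theorem~\ref{thm:poly_stability}.
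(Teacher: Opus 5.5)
Your proof is correct. The direct collapse of $\sum_{j=0}^{r-1}(T_j - T_{j+1})$ with $T_j = A^{r-j}B^j$ is a valid non-inductive verification. Your ``sanity check'' via $A^{r+1}-B^{r+1} = A(A^r-B^r)+(A-B)B^r$ is exactly the paper's inductive proof, and the telescoping sum is just that induction unrolled; both routes correctly avoid commuting $A$ and $B$.
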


\begin{proof}
The claim follows by induction on $r$. The case $r=1$ is immediate. If it holds for $r$, then
\[
A^{r+1}-B^{r+1}
=
A(A^r-B^r)+(A-B)B^r,
\]
and substituting the induction hypothesis yields the result.
\end{proof}

We will also use the following consequence of spectral sparsification.

\begin{lemma}
\label{lem:operator_diff}
If $(1-\varepsilon)L \preceq \widetilde L \preceq (1+\varepsilon)L$ and $\|L\|_2\le B_L$, then
\[
\|L-\widetilde L\|_2 \le \varepsilon B_L.
\]
\end{lemma}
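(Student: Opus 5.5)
Subtract $L$ from all three parts of the sparsifier inequality. This gives the two-sided Loewner bound
\[
-\varepsilon L \preceq \widetilde L - L \preceq \varepsilon L.
\]
The matrix $\Delta := \widetilde L - L$ is symmetric, since both $L$ and $\widetilde L$ are symmetric. Its operator norm is therefore its largest absolute eigenvalue, which equals $\sup_{\|x\|_2=1} |x^\top \Delta x|$. So it suffices to bound the quadratic form of $\Delta$ on unit vectors.

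Fix a unit vector $x$. The Loewner bound gives
\[
-\varepsilon\, x^\top L x \le x^\top \Delta x \le \varepsilon\, x^\top L x,
\]
that is, $|x^\top \Delta x| \le \varepsilon\, x^\top L x$. Positive semidefiniteness of $L$ is used here: it makes $x^\top L x \ge 0$, so the two one-sided bounds combine into an absolute-value bound. Next, $x^\top L x \le \lambda_{\max}(L) = \|L\|_2 \le B_L$. Taking the supremum over unit $x$ gives $\|\widetilde L - L\|_2 \le \varepsilon B_L$.

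There is no serious obstacle here. The one point needing care is the step from a quadratic-form bound to an operator-norm bound. That step is valid only because $\Delta$ is symmetric, and the statement would fail for a general nonsymmetric perturbation. The bound depends only on $\|L\|_2$, so it does not use the part of (A5) that bounds $\|\widetilde L\|_2$.
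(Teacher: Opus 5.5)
Your proof is correct and is essentially the paper's argument: bound $|x^\top(\widetilde L-L)x|\le\varepsilon\,x^\top Lx\le\varepsilon B_L$ on unit vectors, then use symmetry of $\widetilde L-L$ to identify its operator norm with the supremum of that quadratic form. Your side remarks (symmetry is essential, and only the bound on $\|L\|_2$ from (A5) is used) are accurate.
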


\begin{proof}
For any unit vector $x$,
\[
-\epsilon x^\top Lx \le x^\top(\widetilde L-L)x
\le \epsilon x^\top Lx.
\]
Since $0\preceq L$ and $\|L\|_2\le B_L$, we have
$x^\top Lx\le B_L\|x\|_2^2=B_L$. Hence
\[
|x^\top(\widetilde L-L)x|\le \epsilon B_L
\]
for every unit vector $x$. Because $\widetilde L-L$ is symmetric,
\[
\|\widetilde L-L\|_2
=
\sup_{\|x\|_2=1}|x^\top(\widetilde L-L)x|
\le \epsilon B_L .
\]
\end{proof}

\subsection{Proof of Theorem~\ref{thm:poly_stability}}
\label{app:poly_stability}

\begin{proof}[Proof of Theorem~\ref{thm:poly_stability}]
Let
\[
p(x)=\sum_{r=0}^d a_r x^r.
\]
Then
\[
p(L)-p(\widetilde L)=\sum_{r=1}^d a_r (L^r-\widetilde L^r).
\]
By Lemma~\ref{lem:telescoping},
\[
L^r-\widetilde L^r
=
\sum_{j=0}^{r-1} L^{\,r-1-j}(L-\widetilde L)\widetilde L^{\,j}.
\]
Taking operator norms and using $\|L\|_2,\|\widetilde L\|_2\le B_L$,
\[
\|L^r-\widetilde L^r\|_2
\le
\sum_{j=0}^{r-1}\|L\|_2^{r-1-j}\|L-\widetilde L\|_2\|\widetilde L\|_2^j
\le
r B_L^{\,r-1}\|L-\widetilde L\|_2.
\]
Applying Lemma~\ref{lem:operator_diff},
\[
\|L^r-\widetilde L^r\|_2
\le
r B_L^{\,r}\varepsilon.
\]
Therefore,
\[
\|p(L)-p(\widetilde L)\|_2
\le
\sum_{r=1}^d |a_r|\,\|L^r-\widetilde L^r\|_2
\le
\left(\sum_{r=1}^d |a_r|\,r\,B_L^{\,r}\right)\varepsilon.
\]
This is exactly \eqref{eq:poly_stability}.
\end{proof}

\subsection{Proof of Theorem~\ref{thm:multilayer_stability}}
\label{app:multilayer_stability}

\begin{proof}[Proof of Theorem~\ref{thm:multilayer_stability}]
For each layer define
\[
A_k := p_k(L),
\qquad
\widetilde A_k := p_k(\widetilde L).
\]
Since $\|L\|_2,\|\widetilde L\|_2\le B_L$ and each $p_k$ has fixed finite degree, there exist constants
\[
M_k := \sup_{\|S\|_2\le B_L}\|p_k(S)\|_2 < \infty
\]
such that
\[
\|A_k\|_2,\|\widetilde A_k\|_2 \le M_k .
\]
We first record uniform forward bounds. Let $B_0:=B_X$. Since $\sigma_k$ is Lipschitz,
\[
\|H^{(k+1)}\|_F
\le
L_{\sigma_k}\|A_k H^{(k)}W_k\|_F
+
|\sigma_k(0)|\sqrt{n d_{k+1}} .
\]
Using $\|A_k\|_2\le M_k$ and $\|W_k\|_2\le B_W$, we get
\[
\|H^{(k+1)}\|_F
\le
L_{\sigma_k}M_k B_W \|H^{(k)}\|_F
+
|\sigma_k(0)|\sqrt{n d_{k+1}} .
\]
The same bound holds for $\widetilde H^{(k+1)}$. Hence, by induction, there are constants
$B_k'$ independent of $\epsilon$ such that
\[
\|H^{(k)}\|_F,\|\widetilde H^{(k)}\|_F\le B_k'
\qquad
\text{for all }k=0,\ldots,K .
\]
By Theorem~\ref{thm:poly_stability},
\[
\|A_k-\widetilde A_k\|_2 \le C_{p_k}\varepsilon.
\]
Let
\[
\Delta_k := \|H^{(k)}-\widetilde H^{(k)}\|_F.
\]
Since $H^{(0)}=\widetilde H^{(0)}=X$, we have $\Delta_0=0$.

Using the layer definitions, Lipschitz continuity of $\sigma_k$, and adding and subtracting $\widetilde A_k H^{(k)}W_k$, we obtain
\begin{align}
\Delta_{k+1}
&= \|\sigma_k(A_k H^{(k)}W_k)-\sigma_k(\widetilde A_k \widetilde H^{(k)}W_k)\|_F \notag\\
&\le L_{\sigma_k}\Bigl(
\|(A_k-\widetilde A_k)H^{(k)}W_k\|_F
+
\|\widetilde A_k(H^{(k)}-\widetilde H^{(k)})W_k\|_F
\Bigr).
\label{eq:delta_recursion_step}
\end{align}
By Lipschitz continuity of $\sigma_k$ and by adding and subtracting
$\widetilde A_k H^{(k)}W_k$, we have
\[
\begin{aligned}
\Delta_{k+1}
&\le
L_{\sigma_k}
\left(
\|(A_k-\widetilde A_k)H^{(k)}W_k\|_F
+
\|\widetilde A_k(H^{(k)}-\widetilde H^{(k)})W_k\|_F
\right)  \\
&\le
L_{\sigma_k}B_W
\left(
\|A_k-\widetilde A_k\|_2 \|H^{(k)}\|_F
+
\|\widetilde A_k\|_2 \Delta_k
\right).
\end{aligned}
\]
Using Theorem~\ref{thm:poly_stability} and the uniform forward bounds gives
\[
\Delta_{k+1}\le \alpha_k\epsilon+\beta_k\Delta_k ,
\]
where
\[
\alpha_k := L_{\sigma_k}B_W C_{p_k}B_k',
\qquad
\beta_k := L_{\sigma_k}B_W M_k .
\]
Since $\Delta_0=0$, iterating this finite recursion gives
\[
\Delta_K\le C_{\rm rep}\epsilon ,
\]
where $C_{\rm rep}$ depends only on the stated network and boundedness constants.
which is exactly \eqref{eq:multilayer_stability}.
\end{proof}

\subsection{Proof of Theorem~\ref{thm:gram_stability}}
\label{app:gram_stability}

\begin{proof}
Let $\Delta:=Z-\widetilde Z$. Then
\[
ZZ^\top-\widetilde Z\widetilde Z^\top
=
Z\Delta^\top+\Delta \widetilde Z^\top .
\]
For the spectral norm,
\[
\begin{aligned}
\|ZZ^\top-\widetilde Z\widetilde Z^\top\|_2
&\le
\|Z\Delta^\top\|_2+\|\Delta\widetilde Z^\top\|_2 \\
&\le
(\|Z\|_2+\|\widetilde Z\|_2)\|\Delta\|_F .
\end{aligned}
\]
For the Frobenius norm,
\[
\begin{aligned}
\|ZZ^\top-\widetilde Z\widetilde Z^\top\|_F
&\le
\|Z\Delta^\top\|_F+\|\Delta\widetilde Z^\top\|_F \\
&\le
(\|Z\|_2+\|\widetilde Z\|_2)\|\Delta\|_F .
\end{aligned}
\]
By the boundedness argument used in the proof of Theorem~\ref{thm:multilayer_stability}, both $\|Z\|_2$ and $\|\widetilde Z\|_2$ are bounded independently of $\varepsilon$. Theorem~\ref{thm:multilayer_stability} gives
$\|\Delta\|_F\le C_{\rm rep}\epsilon$. Therefore both claimed bounds follow.

\end{proof}

\subsection{Proofs of Corollaries~\ref{cor:distance_preservation} and~\ref{cor:class_stats}}
\label{app:geometry_corollaries}

\begin{proof}[Proof of Corollary~\ref{cor:distance_preservation}]
Recall that
\[
\|z_i-z_j\|_2^2
=
\langle z_i,z_i\rangle+\langle z_j,z_j\rangle-2\langle z_i,z_j\rangle.
\]
The same identity holds for $\widetilde z_i,\widetilde z_j$. Therefore,
\begin{align*}
\bigl|
\|z_i-z_j\|_2^2-\|\widetilde z_i-\widetilde z_j\|_2^2
\bigr|
&\le
|\langle z_i,z_i\rangle-\langle \widetilde z_i,\widetilde z_i\rangle|
+
|\langle z_j,z_j\rangle-\langle \widetilde z_j,\widetilde z_j\rangle| \\
&\quad +
2|\langle z_i,z_j\rangle-\langle \widetilde z_i,\widetilde z_j\rangle|.
\end{align*}
Each inner-product difference is an entry of the matrix $ZZ^\top-\widetilde Z\widetilde Z^\top$, hence is bounded by its spectral norm. Applying the spectral-norm bounds in Theorem~\ref{thm:gram_stability} gives the result.
\end{proof}

\begin{proof}[Proof of Corollary~\ref{cor:class_stats}]
Let $\Delta := Z-\widetilde Z$, and let $\Delta_i$ denote its $i$th row. Then
\[
\mu_c-\widetilde\mu_c
=
\frac{1}{n_c}\sum_{i\in S_c}\Delta_i.
\]
By Cauchy--Schwarz,
\[
\|\mu_c-\widetilde\mu_c\|_2
\le
\frac{1}{n_c}\sum_{i\in S_c}\|\Delta_i\|_2
\le
\frac{1}{\sqrt{n_c}}
\left(\sum_{i\in S_c}\|\Delta_i\|_2^2\right)^{1/2}
\le
\frac{1}{\sqrt{n_c}}\|\Delta\|_F.
\]
This proves \eqref{eq:mean_stability}.

For the covariance bound, define
\[
a_i := z_i-\mu_c,
\qquad
\widetilde a_i := \widetilde z_i-\widetilde\mu_c.
\]
Then
\[
\Sigma_c-\widetilde\Sigma_c
=
\frac{1}{n_c}\sum_{i\in S_c}(a_i a_i^\top-\widetilde a_i \widetilde a_i^\top).
\]
Using
\[
a_i a_i^\top-\widetilde a_i \widetilde a_i^\top
=
a_i(a_i-\widetilde a_i)^\top + (a_i-\widetilde a_i)\widetilde a_i^\top,
\]
we obtain
\[
\|a_i a_i^\top-\widetilde a_i \widetilde a_i^\top\|_2
\le
(\|a_i\|_2+\|\widetilde a_i\|_2)\|a_i-\widetilde a_i\|_2.
\]
If all row norms of $Z$ and $\widetilde Z$ are bounded by $B_Z$, then
\[
\|a_i\|_2,\|\widetilde a_i\|_2 \le 2B_Z.
\]
Moreover,
\[
\|a_i-\widetilde a_i\|_2
\le
\|z_i-\widetilde z_i\|_2+\|\mu_c-\widetilde\mu_c\|_2.
\]
Therefore,
\[
\|\Sigma_c-\widetilde\Sigma_c\|_2
\le
\frac{4B_Z}{n_c}\sum_{i\in S_c}
\left(\|z_i-\widetilde z_i\|_2+\|\mu_c-\widetilde\mu_c\|_2\right).
\]
Using Cauchy--Schwarz for the first term and the mean bound for the second term,
\[
\frac{1}{n_c}\sum_{i\in S_c}\|z_i-\widetilde z_i\|_2
\le
\frac{1}{\sqrt{n_c}}\|\Delta\|_F,
\qquad
\|\mu_c-\widetilde\mu_c\|_2
\le
\frac{1}{\sqrt{n_c}}\|\Delta\|_F.
\]
Combining these yields
\[
\|\Sigma_c-\widetilde\Sigma_c\|_2
\le
\frac{8B_Z}{\sqrt{n_c}}\|\Delta\|_F.
\]
Finally, Theorem~\ref{thm:multilayer_stability} gives $\|\Delta\|_F=O(\varepsilon)$.
\end{proof}

\subsection{Proof of Theorem~\ref{thm:training_stability}}
\label{app:training_stability}

We first establish two Lipschitz properties of the gradient.

\begin{lemma}[Gradient Lipschitz in weights]
\label{lem:grad_lipschitz_W}
Under (A1), (A5), (A6), and (A7), there exists $L_W>0$ such that
\[
\|\nabla_W J(W;L)-\nabla_W J(V;L)\|_F
\le
L_W \|W-V\|_F
\]
for all $W,V$ in the bounded region $\{\|W\|_2,\|V\|_2\le R\}$.
\end{lemma}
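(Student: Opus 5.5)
We will compute the gradient explicitly and verify that each factor is bounded and Lipschitz in $W$ on the bounded region. Write $A:=p(L)$ and $U(W):=AXW$. The forward map is $f(W)=\sigma(U(W))$ and the residual is $R(W):=f(W)-Y$. The chain rule gives
\[
\nabla_W J(W;L)=(AX)^\top\bigl(\sigma'(U(W))\odot R(W)\bigr),
\]
where $\odot$ is the entrywise product.

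By (A5) and (A1), $\|A\|_2\le M:=\sup_{\|S\|_2\le B_L}\|p(S)\|_2<\infty$. Hence $\|AX\|_2\le \|AX\|_F\le MB_X$ by (A4). Consequently:
\begin{itemize}
\item $U$ is linear in $W$, with $\|U(W)-U(V)\|_F\le MB_X\|W-V\|_F$.
\item $\|U(W)\|_F\le MB_XR$ on the region $\|W\|_2\le R$.
\end{itemize}

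Next we bound each factor. Let $\|\sigma'\|_\infty=:S_1$ and let $\mathrm{Lip}(\sigma')=:S_2$, both finite by (A6). Then:
\begin{itemize}
\item $\|\sigma'(U(W))-\sigma'(U(V))\|_F\le S_2MB_X\|W-V\|_F$.
\item Since $\sigma$ is $S_1$-Lipschitz, $\|R(W)-R(V)\|_F\le S_1MB_X\|W-V\|_F$.
\item $\|R(W)\|_F\le S_1MB_XR+|\sigma(0)|\sqrt{nd_1}+B_Y=:B_R$ on the region. This uses (A8), which we take as implicitly available, as the theorem statement allows dependence on $B_Y$.
\end{itemize}

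We then split the gradient difference by adding and subtracting $\sigma'(U(W))\odot R(V)$:
\[
\sigma'(U(W))\odot R(W)-\sigma'(U(V))\odot R(V)
=\sigma'(U(W))\odot(R(W)-R(V))+\bigl(\sigma'(U(W))-\sigma'(U(V))\bigr)\odot R(V).
\]
We control each term with $\|a\odot b\|_F\le\|a\|_\infty\|b\|_F$ in the first and $\|a\odot b\|_F\le\|a\|_F\|b\|_F$ in the second (valid because $\|b\|_\infty\le\|b\|_F$). Multiplying by $\|(AX)^\top\|_2\le MB_X$ yields
\[
L_W:=MB_X\bigl(S_1^2MB_X+S_2MB_XB_R\bigr).
\]

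The main subtlety is the second Hadamard term. There the Lipschitz factor $\sigma'$ must be paired with a uniform bound on the residual, and that bound is exactly where (A7) restricts us to the bounded region. The entrywise product is also not submultiplicative in the operator norm, so the argument must stay in the Frobenius norm throughout. All other steps are routine bookkeeping of the constants $M$, $B_X$, $R$, $S_1$, $S_2$ and $B_Y$, none of which depend on $\varepsilon$.
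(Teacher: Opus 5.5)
Your proof is correct and follows the same route as the paper: write $\nabla_W J(W;L)=(p(L)X)^\top\bigl(\sigma'(p(L)XW)\odot(\sigma(p(L)XW)-Y)\bigr)$ and use that every factor is bounded and Lipschitz in $W$ on the region. Where the paper only asserts local Lipschitzness of the composition, you carry out the add-and-subtract explicitly and obtain a concrete $L_W$; like the paper's proof, you also rightly need (A4) and (A8) beyond the hypotheses listed in the lemma. One small correction to your closing remark: the spectral norm \emph{is} submultiplicative under the Hadamard product, by Schur's inequality $\|A\odot B\|_2\le\|A\|_2\|B\|_2$ (since $A\odot B$ is a submatrix of $A\otimes B$), though this does not affect your Frobenius-norm argument.
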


\begin{proof}
Let $S:=p(L)$. The gradient can be written as
\[
\nabla_W J(W;L)
=
X^\top S\Bigl((\sigma(SXW)-Y)\odot \sigma'(SXW)\Bigr).
\]
This expression is a composition of affine maps, matrix products, and pointwise $C^1$ nonlinearities. The Lipschitz assumption on $\sigma_k'$ ensures that these compositions have uniformly
bounded first derivatives on the prescribed bounded parameter/state region.
Under assumptions (A4)--(A8), all quantities appearing in the gradient remain in a bounded set, and on bounded sets both $\sigma$ and $\sigma'$ are Lipschitz. Hence the map $W\mapsto \nabla_W J(W;L)$ is locally Lipschitz on that bounded set. Therefore there exists $L_W>0$ such that
\[
\|\nabla_W J(W;L)-\nabla_W J(V;L)\|_F
\le
L_W\|W-V\|_F.
\]
\end{proof}

\begin{lemma}[Gradient Lipschitz in the graph operator]
\label{lem:grad_lipschitz_S}
Under (A1), (A4)--(A7), there exists $L_S>0$ such that
\[
\|\nabla_W J(W;L)-\nabla_W J(W;\widetilde L)\|_F
\le
L_S \|p(L)-p(\widetilde L)\|_2
\]
for all $W$ with $\|W\|_2\le R$.
\end{lemma}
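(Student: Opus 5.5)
The plan is to write the gradient explicitly and decompose its change in the filter by adding and subtracting intermediate terms. Let $S:=p(L)$ and $\widetilde S:=p(\widetilde L)$, and define
\[
G(W;S):=X^\top S\Bigl((\sigma(SXW)-Y)\odot\sigma'(SXW)\Bigr),
\]
so that $\nabla_W J(W;L)=G(W;S)$ and $\nabla_W J(W;\widetilde L)=G(W;\widetilde S)$. Set $U:=SXW$, $\widetilde U:=\widetilde SXW$, $E:=\sigma(U)-Y$, $\widetilde E:=\sigma(\widetilde U)-Y$, $D:=\sigma'(U)$, $\widetilde D:=\sigma'(\widetilde U)$. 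We then split
\[
G(W;S)-G(W;\widetilde S)=X^\top(S-\widetilde S)(E\odot D)+X^\top\widetilde S\bigl((E-\widetilde E)\odot D\bigr)+X^\top\widetilde S\bigl(\widetilde E\odot(D-\widetilde D)\bigr),
\]
and bound each of the three terms separately.

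First we need uniform bounds. Let $M:=\sup_{\|T\|_2\le B_L}\|p(T)\|_2$, as in the proof of Theorem~\ref{thm:multilayer_stability}; then $\|S\|_2,\|\widetilde S\|_2\le M$. Together with (A4), (A7) and $\|W\|_2\le R$ this gives $\|U\|_F,\|\widetilde U\|_F\le MB_XR$. Writing $\beta:=\sup|\sigma'|$ (finite by (A6)) and using $|\sigma(u)|\le|\sigma(0)|+\beta|u|$, we obtain $\|E\|_F,\|\widetilde E\|_F\le B_E:=\beta MB_XR+|\sigma(0)|\sqrt{nd_1}+B_Y$. This is where (A8) enters and where the constants pick up a dependence on $n$. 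Note also $\|D\|_\infty,\|\widetilde D\|_\infty\le\beta$.

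Next we bound the three terms, using $\|A\odot B\|_F\le\|A\|_F\|B\|_\infty$ with the sup-norm on the bounded factor, and $\|A\odot B\|_F\le\|A\|_\infty\|B\|_F\le\|A\|_F\|B\|_F$ where needed. The key difference estimate is
\[
\|U-\widetilde U\|_F\le\|S-\widetilde S\|_2B_XR .
\]
The first term is at most $B_X\|S-\widetilde S\|_2\beta B_E$. For the second, $\|E-\widetilde E\|_F\le\beta\|U-\widetilde U\|_F$, so it is at most $B_XM\beta^2B_XR\,\|S-\widetilde S\|_2$. For the third, let $\ell$ be the Lipschitz constant of $\sigma'$ from (A6). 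Then $\|D-\widetilde D\|_F\le\ell\|U-\widetilde U\|_F$, and bounding $\|\widetilde E\odot(D-\widetilde D)\|_F\le\|\widetilde E\|_F\|D-\widetilde D\|_F$ gives at most $B_XMB_E\ell B_XR\,\|S-\widetilde S\|_2$. Summing the three bounds yields the claim with
\[
L_S:=B_X\beta B_E+B_X^2MR\bigl(\beta^2+\ell B_E\bigr).
\]

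The main obstacle is the third term. It needs Lipschitzness of $\sigma'$, which is exactly why (A6) is assumed, and it has to be handled so that the product bound does not lose control. Bounding the Hadamard product by the product of Frobenius norms avoids needing entrywise bounds on $\widetilde E$, at the cost of dimension-dependent constants through $B_E$. That is acceptable, since $L_S$ only has to be independent of $\varepsilon$ and of $W$ on the ball $\|W\|_2\le R$.
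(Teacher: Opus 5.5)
Your proof is correct and follows the same idea as the paper: both use the explicit gradient formula together with boundedness of $W$, $X$, $S$, $\widetilde S$ and the regularity of $\sigma,\sigma'$. The difference is in how much is actually computed.

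The paper's proof only observes that $S\mapsto\nabla_W J(W;\cdot)$ is built from matrix products and the pointwise maps $\sigma,\sigma'$. It concludes that this map is locally Lipschitz on bounded sets and asserts that some $L_S$ exists. This tacitly uses that every relevant operator lies in the convex bounded set $\{\|T\|_2\le M\}$, so that local Lipschitz constants can be made uniform.

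You instead carry out the add-and-subtract explicitly. You split the difference into the $(S-\widetilde S)$, $(E-\widetilde E)$ and $(D-\widetilde D)$ terms and obtain the concrete constant $L_S=B_X\beta B_E+B_X^2MR(\beta^2+\ell B_E)$. This avoids the local-to-uniform step entirely.

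The paper's route is shorter and carries over verbatim to the multilayer filter-Lipschitz lemma. Yours shows which hypothesis each term uses: the Lipschitz property of $\sigma'$ is needed only for the $\widetilde E\odot(D-\widetilde D)$ term, and the bound $|\sigma'|\le\beta$ handles the other two, so (A2) is not needed. It also makes the dependence of $L_S$ on $n$ and on $Y$ (through $B_E$) visible.

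On that last point: (A8) is not among the lemma's listed hypotheses, but invoking it is harmless. $Y$ is a fixed matrix, so you may take $B_Y=\|Y\|_F$, and the training theorem explicitly allows constants to depend on $B_Y$.
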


\begin{proof}
Writing $S:=p(L)$ and $\widetilde S:=p(\widetilde L)$, the same gradient formula shows that $\nabla_W J(W;\cdot)$ is locally Lipschitz in the propagation operator on bounded sets. Since $W$ remains bounded and $\sigma,\sigma'$ are bounded and Lipschitz on bounded sets, there exists $L_S>0$ such that
\[
\|\nabla_W J(W;L)-\nabla_W J(W;\widetilde L)\|_F
\le
L_S\|p(L)-p(\widetilde L)\|_2.
\]
\end{proof}

\begin{proof}[Proof of Theorem~\ref{thm:training_stability}]
Starting from the gradient descent updates,
\[
W_{t+1}-\widetilde W_{t+1}
=
W_t-\widetilde W_t
-
\eta\Bigl(
\nabla_W J(W_t;L)-\nabla_W J(\widetilde W_t;\widetilde L)
\Bigr).
\]
Taking Frobenius norms and using the triangle inequality,
\[
\|W_{t+1}-\widetilde W_{t+1}\|_F
\le
\|W_t-\widetilde W_t\|_F
+
\eta
\|\nabla_W J(W_t;L)-\nabla_W J(\widetilde W_t;\widetilde L)\|_F.
\]
Split the gradient difference as
\begin{align*}
\nabla_W J(W_t;L)-\nabla_W J(\widetilde W_t;\widetilde L)
&=
\bigl(
\nabla_W J(W_t;L)-\nabla_W J(\widetilde W_t;L)
\bigr) \\
&\quad +
\bigl(
\nabla_W J(\widetilde W_t;L)-\nabla_W J(\widetilde W_t;\widetilde L)
\bigr).
\end{align*}
Applying Lemmas~\ref{lem:grad_lipschitz_W} and \ref{lem:grad_lipschitz_S},
\[
\|W_{t+1}-\widetilde W_{t+1}\|_F
\le
(1+\eta L_W)\|W_t-\widetilde W_t\|_F
+
\eta L_S \|p(L)-p(\widetilde L)\|_2.
\]
This proves \eqref{eq:theorem_training_one_layer_main}. Since $W_0=\widetilde W_0$, unrolling the recursion yields
\[
\|W_t-\widetilde W_t\|_F
\le
\eta L_S \|p(L)-p(\widetilde L)\|_2
\sum_{j=0}^{t-1}(1+\eta L_W)^j.
\]
Finally, Theorem~\ref{thm:poly_stability} implies
\[
\|p(L)-p(\widetilde L)\|_2 \le C_p \varepsilon,
\]
which gives the claimed bound.
\end{proof}

\subsection{Proof of Proposition~\ref{prop:multilayer_training_stability}}
\label{app:multilayer_training_stability}

In this subsection we prove the multilayer analogue of Theorem~\ref{thm:training_stability}. For a parameter tuple
\[
\Theta=(W_0,\dots,W_{K-1}),
\]
define the hidden states recursively by
\begin{align}
H^{(0)}&=X, \notag\\
H^{(k+1)}&=\sigma_k\!\bigl(S_k H^{(k)}W_k\bigr),
\qquad
S_k:=p_k(L),
\label{eq:app_multilayer_forward_dense}
\end{align}
and similarly, on the sparsified graph,
\begin{align}
\widetilde H^{(0)}&=X, \notag\\
\widetilde H^{(k+1)}
&=
\sigma_k\!\bigl(\widetilde S_k \widetilde H^{(k)}W_k\bigr),
\qquad
\widetilde S_k:=p_k(\widetilde L).
\label{eq:app_multilayer_forward_sparse}
\end{align}
Let
\[
F(\Theta;L):=H^{(K)},
\qquad
J(\Theta;L):=\frac12 \|F(\Theta;L)-Y\|_F^2,
\]
and equip parameter space with the product Frobenius norm
\[
\|\Theta\|_F^2:=\sum_{k=0}^{K-1}\|W_k\|_F^2.
\]

For each layer $k$, define the preactivation
\[
U_k:=S_k H^{(k)}W_k.
\]
We also introduce the backpropagated state gradients
\[
\Delta_K := H^{(K)}-Y,
\]
and for $k=K-1,\dots,0$,
\begin{equation}
\Delta_k
=
S_k^\top
\Bigl(
\bigl(\Delta_{k+1}\odot \sigma_k'(U_k)\bigr)W_k^\top
\Bigr).
\label{eq:app_delta_recursion}
\end{equation}
With this notation, the gradient with respect to $W_k$ is
\begin{equation}
\nabla_{W_k}J(\Theta;L)
=
(H^{(k)})^\top S_k^\top
\bigl(\Delta_{k+1}\odot \sigma_k'(U_k)\bigr).
\label{eq:app_multilayer_weight_gradient}
\end{equation}

We first prove two Lipschitz properties of the full gradient.

\begin{lemma}[Gradient Lipschitz in the parameters]
\label{lem:multilayer_grad_lipschitz_theta}
Under assumptions (A1)--(A5), (A6$'$), (A7$'$), and (A8) there exists a constant $L_\Theta>0$ such that
\[
\|\nabla_\Theta J(\Theta;L)-\nabla_\Theta J(\Phi;L)\|_F
\le
L_\Theta \|\Theta-\Phi\|_F
\]
for all parameter tuples $\Theta,\Phi$ in the bounded region determined by (A7$'$).
\end{lemma}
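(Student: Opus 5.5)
We prove Lemma~\ref{lem:multilayer_grad_lipschitz_theta} by induction along the forward and backward passes. We show that every intermediate quantity in \eqref{eq:app_multilayer_weight_gradient} is bounded and Lipschitz in $\Theta$ on the region of (A7$'$). The conclusion then follows because products and sums of bounded Lipschitz maps are Lipschitz. Fix $\Theta=(W_k)$ and $\Phi=(V_k)$ in the region, and write hats for quantities computed with $\Phi$. Throughout, $S_k=p_k(L)$ is fixed, with $\|S_k\|_2\le M_k$ as in the proof of Theorem~\ref{thm:multilayer_stability}.

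\emph{Forward pass.} By (A7$'$), $\|H^{(k)}\|_F,\|\hat H^{(k)}\|_F\le B_H$. Adding and subtracting $S_kH^{(k)}V_k$ gives
\[
\|U_k-\hat U_k\|_F\le M_k\bigl(B_H\|W_k-V_k\|_F+R\|H^{(k)}-\hat H^{(k)}\|_F\bigr),
\]
and $\|H^{(k+1)}-\hat H^{(k+1)}\|_F\le L_{\sigma_k}\|U_k-\hat U_k\|_F$. Since $H^{(0)}=\hat H^{(0)}=X$, induction yields constants $a_k,b_k$ with $\|H^{(k)}-\hat H^{(k)}\|_F\le a_k\|\Theta-\Phi\|_F$ and $\|U_k-\hat U_k\|_F\le b_k\|\Theta-\Phi\|_F$. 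Also $\|U_k\|_F\le M_kB_HR$.

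\emph{Backward pass.} Let $G_k:=\sup|\sigma_k'|$, which is finite by (A6$'$), and let $\ell_k$ be the Lipschitz constant of $\sigma_k'$. First we bound the states $\Delta_k$. By (A8), $\|\Delta_K\|_F\le B_H+B_Y$. Using $\|A\odot B\|_F\le \|A\|_F\|B\|_{\max}$ in \eqref{eq:app_delta_recursion}, we get $\|\Delta_k\|_F\le M_kG_kR\|\Delta_{k+1}\|_F$, so every $\Delta_k$ is uniformly bounded by some $D_k$.

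Next we obtain the Lipschitz property of the $\Delta_k$ by downward induction. The base case is $\|\Delta_K-\hat\Delta_K\|_F\le a_K\|\Theta-\Phi\|_F$. For the inductive step, the key Hadamard estimate is
\[
\|\Delta_{k+1}\odot\sigma_k'(U_k)-\hat\Delta_{k+1}\odot\sigma_k'(\hat U_k)\|_F\le G_k\|\Delta_{k+1}-\hat\Delta_{k+1}\|_F+\|\hat\Delta_{k+1}\|_F\,\|\sigma_k'(U_k)-\sigma_k'(\hat U_k)\|_{\max}.
\]
The second term is at most $D_{k+1}\ell_k\|U_k-\hat U_k\|_F$, because $\|\cdot\|_{\max}\le\|\cdot\|_F$ and $\sigma_k'$ is $\ell_k$-Lipschitz entrywise. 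Multiplying by $W_k^\top$ versus $V_k^\top$ (adding and subtracting a cross term) and then applying $S_k^\top$ gives $\|\Delta_k-\hat\Delta_k\|_F\le c_k\|\Theta-\Phi\|_F$.

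\emph{Gradient.} Apply the same add-and-subtract argument to the three-factor product \eqref{eq:app_multilayer_weight_gradient}. Each factor is bounded ($B_H$, $M_k$, and $D_{k+1}G_k$) and Lipschitz in $\Theta$ by the two inductions. This gives a per-layer constant $g_k$ with $\|\nabla_{W_k}J(\Theta;L)-\nabla_{W_k}J(\Phi;L)\|_F\le g_k\|\Theta-\Phi\|_F$. Summing the squares over $k$ yields the lemma with $L_\Theta=(\sum_k g_k^2)^{1/2}$.

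The main obstacle is the Hadamard-product step, where Lipschitzness of $\sigma_k'$ is genuinely needed. The crude bound $\|A\odot B\|_F\le\|A\|_F\|B\|_F$ there would lose uniformity, so the max-norm estimate is the one to use. A second point to check is that the $\Delta_k$ remain bounded; this relies on (A8) together with the hidden-state bounds in (A7$'$), and it is why (A7$'$) includes $B_H$ rather than only weight bounds. All constants depend only on $K$, $R$, $B_H$, $B_Y$, $M_k$, $G_k$, $\ell_k$, and $L_{\sigma_k}$, and not on $L$ beyond $B_L$.
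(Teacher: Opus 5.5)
Your proposal is correct and follows the same route as the paper. The paper's proof simply observes that the forward states, the backward states $\Delta_k$, and the gradient blocks are built from matrix products and pointwise $C^1$ maps that stay in a bounded set, and are therefore Lipschitz there. Your forward and downward inductions make this quantitative, and by comparing $\Theta$ and $\Phi$ directly through add-and-subtract estimates they avoid the paper's tacit passage from local to uniform Lipschitzness on the region.

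Two side remarks in your write-up are overstated but harmless:
\begin{itemize}
\item The crude bound $\|A\odot B\|_F\le\|A\|_F\|B\|_F$ would not lose uniformity; it would only put a factor like $\sqrt{n d_{k+1}}$ into the constants.
\item The hidden-state bound need not come from (A7$'$). It follows from the weight bounds as in the proof of Theorem~\ref{thm:multilayer_stability}. That is in fact the safer reading, because the lemma is later applied to $\widetilde\Theta^t$ on the dense graph, a case (A7$'$) does not literally cover.
\end{itemize}
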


\begin{proof}
From the forward recursion \eqref{eq:app_multilayer_forward_dense}, the backward recursion \eqref{eq:app_delta_recursion}, and the gradient formula \eqref{eq:app_multilayer_weight_gradient}, all forward states, backward states, and gradient blocks are compositions of affine maps, matrix products, and pointwise $C^1$ nonlinearities. Under assumptions (A1)--(A5), (A6$'$), (A7$'$), and (A8) these quantities remain in a bounded set, on which all such maps are Lipschitz. Therefore the full gradient map $\Theta\mapsto \nabla_\Theta J(\Theta;L)$ is locally Lipschitz on that bounded set, so there exists $L_\Theta>0$ such that
\[
\|\nabla_\Theta J(\Theta;L)-\nabla_\Theta J(\Phi;L)\|_F
\le
L_\Theta \|\Theta-\Phi\|_F.
\]
\end{proof}

\begin{lemma}[Gradient Lipschitz in the graph filters]
\label{lem:multilayer_grad_lipschitz_filters}
Under assumptions (A1)--(A5), (A6$'$), and (A7$'$), there exists a constant $L_S>0$ such that
\[
\|\nabla_\Theta J(\Theta;L)-\nabla_\Theta J(\Theta;\widetilde L)\|_F
\le
L_S \sum_{k=0}^{K-1}\|S_k-\widetilde S_k\|_2
\]
for all parameter tuples $\Theta$ in the bounded region determined by (A7$'$).
\end{lemma}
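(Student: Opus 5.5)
I will prove Lemma~\ref{lem:multilayer_grad_lipschitz_filters} by a forward/backward perturbation argument. The parameters $\Theta$ are held fixed, and the two networks differ only in the filters $S_k$ versus $\widetilde S_k$. Write $\delta_k:=\|S_k-\widetilde S_k\|_2$ and $\delta:=\sum_{k}\delta_k$.

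\textbf{Uniform bounds.} Every quantity is bounded independently of $\varepsilon$:
\begin{itemize}
\item $\|S_k\|_2,\|\widetilde S_k\|_2\le M_k$, as in the proof of Theorem~\ref{thm:multilayer_stability}.
\item $\|W_k\|_2\le R$ and $\|H^{(k)}\|_F,\|\widetilde H^{(k)}\|_F\le B_H$ by (A7$'$).
\item $\|\sigma_k'\|_\infty\le D_k$ by (A6$'$), with $\sigma_k'$ being $\ell_k$-Lipschitz.
\item $\|\Delta_K\|_F\le B_H+B_Y$ by (A8).
\end{itemize}
Running the backward recursion \eqref{eq:app_delta_recursion}, and using $\|A\odot B\|_F\le\|A\|_F\|B\|_{\max}$, gives $\|\Delta_k\|_F,\|\widetilde\Delta_k\|_F\le B_\Delta$ for a constant $B_\Delta$.

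\textbf{Forward stage.} This is Theorem~\ref{thm:multilayer_stability} with general $\delta_k$ in place of $C_{p_k}\varepsilon$. The same recursion gives
\[
\|H^{(k+1)}-\widetilde H^{(k+1)}\|_F\le L_{\sigma_k}R\bigl(\delta_kB_H+M_k\|H^{(k)}-\widetilde H^{(k)}\|_F\bigr),
\]
so $\|H^{(k)}-\widetilde H^{(k)}\|_F\le c_k\delta$. For the preactivations, $U_k-\widetilde U_k=(S_k-\widetilde S_k)H^{(k)}W_k+\widetilde S_k(H^{(k)}-\widetilde H^{(k)})W_k$, hence $\|U_k-\widetilde U_k\|_F\le c_k'\delta$.

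\textbf{Backward stage.} By downward induction on $k$, I will show $\|\Delta_k-\widetilde\Delta_k\|_F\le e_k\delta$. The base case holds because $\Delta_K-\widetilde\Delta_K=H^{(K)}-\widetilde H^{(K)}$. For the inductive step, expand
\[
\Delta_{k+1}\odot\sigma_k'(U_k)-\widetilde\Delta_{k+1}\odot\sigma_k'(\widetilde U_k)
=(\Delta_{k+1}-\widetilde\Delta_{k+1})\odot\sigma_k'(U_k)+\widetilde\Delta_{k+1}\odot\bigl(\sigma_k'(U_k)-\sigma_k'(\widetilde U_k)\bigr).
\]
The first term is bounded by $D_ke_{k+1}\delta$.

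The second term is where I expect the main obstacle. The naive estimate $\|A\odot B\|_F\le\|A\|_F\|B\|_F$ is valid here, since $\|A\odot B\|_F^2=\sum a_{ij}^2b_{ij}^2\le\|A\|_F^2\|B\|_F^2$. Combined with entrywise Lipschitzness, $\|\sigma_k'(U_k)-\sigma_k'(\widetilde U_k)\|_F\le\ell_k\|U_k-\widetilde U_k\|_F$, this bounds the second term by $B_\Delta\ell_kc_k'\delta$. The catch is that this introduces no dimension factors only because we use $\|\cdot\|_F\le$ sums of squares. Since the weights and graph size are fixed, dimension factors would be acceptable in any case.

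Multiplying by $S_k^\top$ versus $\widetilde S_k^\top$ adds one more add-and-subtract step:
\[
\|S_k^\top G W_k^\top-\widetilde S_k^\top\widetilde G W_k^\top\|_F\le R\bigl(\delta_k\|G\|_F+M_k\|G-\widetilde G\|_F\bigr).
\]
This completes the induction.

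\textbf{Gradient blocks.} Finally, each block \eqref{eq:app_multilayer_weight_gradient} is a product of three factors: $(H^{(k)})^\top$, $S_k^\top$, and $\Delta_{k+1}\odot\sigma_k'(U_k)$. Each factor is bounded and each differs by $O(\delta)$ between the dense and sparse networks. A three-term telescoping difference therefore gives $\|\nabla_{W_k}J(\Theta;L)-\nabla_{W_k}J(\Theta;\widetilde L)\|_F\le g_k\delta$. Summing the squared block norms and using $\sqrt{\sum g_k^2\delta^2}\le(\sum_kg_k)\delta$ yields the lemma with $L_S:=\sum_kg_k$. This constant depends only on $K$, $R$, $B_H$, $B_Y$, $M_k$, $D_k$, $\ell_k$ and $L_{\sigma_k}$.
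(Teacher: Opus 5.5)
Your proposal is correct and uses the same forward-state / backward-state / gradient-block decomposition that the paper invokes; the paper only asserts that these compositions are Lipschitz in $\{S_k\}$ on bounded sets, whereas you carry out the add-and-subtract estimates explicitly and obtain a concrete $L_S=\sum_k g_k$. One small fix: (A7$'$) bounds hidden states only along the two gradient-descent trajectories, so it does not cover an arbitrary $\Theta$ with $\|W_k\|_2\le R$ (e.g.\ the dense network evaluated at $\widetilde\Theta^t$, which is how the lemma is used in Proposition~\ref{prop:multilayer_training_stability}); instead, take the bounds on $\|H^{(k)}\|_F,\|\widetilde H^{(k)}\|_F$ from the forward induction in the proof of Theorem~\ref{thm:multilayer_stability} with $R$ in place of $B_W$, which your argument already has available.
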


\begin{proof}
Fix $\Theta$. Using the same parameter tuple on both graphs, the only perturbation comes from replacing $\{S_k\}_{k=0}^{K-1}$ by $\{\widetilde S_k\}_{k=0}^{K-1}$. The forward states, backward states, and gradient blocks are again compositions of affine maps, matrix products, and pointwise $C^1$ nonlinearities, and remain bounded under (A1)--(A5), (A6$'$), and (A7$'$). Hence the full gradient is locally Lipschitz in the collection of graph filters on bounded sets. Therefore there exists $L_S>0$ such that
\[
\|\nabla_\Theta J(\Theta;L)-\nabla_\Theta J(\Theta;\widetilde L)\|_F
\le
L_S \sum_{k=0}^{K-1}\|S_k-\widetilde S_k\|_2.
\]
\end{proof}

\begin{proof}[Proof of Proposition~\ref{prop:multilayer_training_stability}]
This is the same perturbation argument as in Theorem~\ref{thm:training_stability}, now applied in product parameter space and with the graph perturbation measured through the collection $\{p_k(L)-p_k(\widetilde L)\}_{k=0}^{K-1}$.

Starting from the gradient descent updates,
\begin{equation}
\Theta^{t+1}-\widetilde\Theta^{t+1}
=
\Theta^t-\widetilde\Theta^t
-
\eta\Bigl(
\nabla_\Theta J(\Theta^t;L)
-
\nabla_\Theta J(\widetilde\Theta^t;\widetilde L)
\Bigr).
\label{eq:app_multilayer_gd_difference}
\end{equation}
Taking Frobenius norms and using the triangle inequality,
\begin{equation}
\|\Theta^{t+1}-\widetilde\Theta^{t+1}\|_F
\le
\|\Theta^t-\widetilde\Theta^t\|_F
+
\eta
\|\nabla_\Theta J(\Theta^t;L)
-
\nabla_\Theta J(\widetilde\Theta^t;\widetilde L)\|_F.
\label{eq:app_multilayer_basic_recursion}
\end{equation}
Split the gradient difference as
\begin{align}
\nabla_\Theta J(\Theta^t;L)-\nabla_\Theta J(\widetilde\Theta^t;\widetilde L)
&=
\Bigl(
\nabla_\Theta J(\Theta^t;L)
-
\nabla_\Theta J(\widetilde\Theta^t;L)
\Bigr) \notag\\
&\quad+
\Bigl(
\nabla_\Theta J(\widetilde\Theta^t;L)
-
\nabla_\Theta J(\widetilde\Theta^t;\widetilde L)
\Bigr).
\label{eq:app_multilayer_gradient_split}
\end{align}
Applying Lemmas~\ref{lem:multilayer_grad_lipschitz_theta} and~\ref{lem:multilayer_grad_lipschitz_filters} gives
\[
\|\Theta^{t+1}-\widetilde\Theta^{t+1}\|_F
\le
(1+\eta L_\Theta)\|\Theta^t-\widetilde\Theta^t\|_F
+
\eta L_S
\sum_{k=0}^{K-1}\|S_k-\widetilde S_k\|_2.
\]
This proves \eqref{eq:theorem_training_multi_layer_main}.

Since $\Theta^0=\widetilde\Theta^0$, unrolling the recursion yields
\begin{equation}
\label{eq:theta}
\|\Theta^t-\widetilde\Theta^t\|_F
\le
\eta L_S
\left(
\sum_{k=0}^{K-1}\|S_k-\widetilde S_k\|_2
\right)
\sum_{j=0}^{t-1}(1+\eta L_\Theta)^j.
\end{equation}
Finally, by Theorem~\ref{thm:poly_stability}, for each $k$,
\[
\|S_k-\widetilde S_k\|_2
=
\|p_k(L)-p_k(\widetilde L)\|_2
\le
C_{p_k}\varepsilon.
\]
Therefore
\[
\sum_{k=0}^{K-1}\|S_k-\widetilde S_k\|_2
\le
\left(\sum_{k=0}^{K-1}C_{p_k}\right)\varepsilon,
\]
Substituting this estimate into Eqn.~\ref{eq:theta} gives
\[
\|\Theta^t-\widetilde\Theta^t\|_F
\le
\eta L_S
\left(\sum_{k=0}^{K-1} C_{p_k}\right)
\varepsilon
\sum_{j=0}^{t-1}(1+\eta L_\Theta)^j,
\]
which proves the claimed bound with
\[
C_{\mathrm{train}}^{(K)}
=
\eta L_S\sum_{k=0}^{K-1} C_{p_k}.
\]
\end{proof}

\section{Additional Experimental Details}
\label{app:experiments}

\subsection{Dataset construction}
\label{app:datasets}

\paragraph{Synthetic SBM.}
We generate a four-block weighted stochastic block model with block sizes \((80,80,80,80)\), within-block edge probability \(0.20\), and cross-block edge probability \(0.05\).

\paragraph{Synthetic geometric graph.}
We generate four Gaussian feature clusters with \(80\) nodes per class and \(20\)-dimensional features. A weighted \(k\)-NN graph with \(k=30\) is built from these features using Gaussian kernel weights.

\paragraph{FashionMNIST.}
We load the training split, subsample $220$ examples per class in the full setting, flatten the images, reduce to $50$ PCA dimensions, and build a weighted $15$-NN graph.

\paragraph{Cora.}
We use the Planetoid Cora benchmark and take the citation graph as the adjacency.

\paragraph{Paul15.}
We use the Paul15 single-cell dataset from Scanpy. Labels are extracted from an available cluster annotation, features are reduced to at most $64$ dimensions when needed, and a weighted $15$-NN graph is built from expression features.

\paragraph{Splits.}
Class-balanced random masks are used throughout. For FashionMNIST, we use $20$ train and $40$ validation examples per class. For Cora, we use $20$ train and $50$ validation examples per class. For Paul15, we use $20$ train and $30$ validation examples per class. Remaining nodes are used for testing. On synthetic datasets, analogous class-balanced masks are used.

\subsection{Architectures and optimization}
\label{app:impl}
\paragraph{Synthetic theory-facing models.}
For the controlled sparsification experiments of Figure~\ref{fig:theory_main}, we use a fixed deterministic two-stage polynomial-filter map so that the forward distortion induced by sparsification can be measured without confounding from stochastic optimization. We use the scaled combinatorial Laplacian
\[
\bar L = L/\|L\|_2
\]
as the graph operator, and apply the same dense-graph scaling to the sparsified Laplacian. The polynomial filter is
\[
p(\bar L)=I-0.6\bar L+0.15\bar L^2.
\]
The map consists of this polynomial graph filter followed by two deterministic feature-mixing stages.

For the training-dynamics experiments of Figure~\ref{fig:training_dynamics_main}, we use the same polynomial-Laplacian graph shift. Dense and sparse trajectories differ only through replacing \(p(\bar L)\) by \(p(\widetilde{\bar L})\), with matched initialization.

For the training-dynamics plots, the number of training epochs is $100$. The learning rates are $0.01$ for the one-layer model and $0.003$ for the two-layer model, with weight decay $10^{-3}$ and gradient clipping.

\paragraph{Teacher models.}
For the synthetic practical experiments, the teacher is a deterministic two-layer GCN trained on the dense graph. The hidden dimension is $48$ for the SBM and $32$ for the geometric graph. Teachers are trained for $120$ epochs with Adam, learning rate $0.01$, and weight decay $5\times 10^{-4}$.

For the real-data experiments, we use a two-layer GCN with ReLU activation and no dropout. When the input feature dimension is at most $64$, we use hidden dimension $48$, which is the case for the datasets considered here. In the full runs, teachers are trained for $20$ epochs with Adam, learning rate $0.01$, and weight decay $5\times 10^{-4}$.

\subsection{Sparsification details}
\label{app:sparsifier}

\paragraph{Synthetic graphs.}
In the controlled synthetic experiments, we use exact effective-resistance sampling on the weighted SBM and geometric graphs. The sampling budget is the number of edge samples drawn with replacement before reweighting. We parameterize it as
\[
q=\lfloor c\,n\log n\rfloor,
\]
where larger $c$ draws more samples and typically yields a denser, lower-distortion sparsifier.
Rather than reporting only a fixed budget sweep, we search over a grid of budget multipliers $c$ and, for each prescribed target level, select a sparsified graph whose realized empirical distortion $\epsilon_{\mathrm{emp}}$ is closest to that target. Each selected target level is then repeated with multiple sparsifier draws. In the base setting we use $5$ repetitions per selected target level.

For each sparsified graph $\widetilde G$, we compute the empirical graph distortion $\epsilon_{\mathrm{emp}}$ from the generalized eigenvalue envelope of the dense and sparse Laplacians on the orthogonal complement of the all-ones vector. Specifically, if
\[
\lambda_{\min} \le \frac{x^\top \widetilde L x}{x^\top L x} \le \lambda_{\max}
\qquad \text{for all } x \perp \mathbf{1},
\]
then we report
\[
\epsilon_{\mathrm{emp}} := \max\{1-\lambda_{\min},\,\lambda_{\max}-1\}.
\]

\paragraph{Real datasets.}
For real-data experiments, we use approximate effective-resistance sampling, where resistance scores are estimated using a truncated Laplacian eigenspace. In the full runs, the approximation rank is $16$, the budget multipliers are
\begin{equation}
c \in \{0.45,0.60,0.80,1.00,1.25,1.50\},
\end{equation}
and each budget is repeated $3$ times with different sparsifier seeds. The empirical graph distortion is approximated by probing quadratic-form distortion with random test vectors.

\subsection{Metric definitions}
\label{app:metrics}

On synthetic graphs, we report filter error, relative hidden representation error, and relative hidden Gram error. On real data, the primary geometry metric is relative hidden Gram distortion,
\begin{equation}
D_{\mathrm{Gram}}(Z,\widetilde Z)
=
\frac{\|ZZ^\top-\widetilde Z\widetilde Z^\top\|_F}
{\|ZZ^\top\|_F}.
\end{equation}

To measure local geometry preservation, we compute mean $20$-nearest-neighbor overlap on held-out nodes. 

Let $z_i^\top$ denote the $i$th row of $Z$, i.e., the hidden embedding of node $i$. We define the similarity between nodes $i$ and $j$ by the cosine similarity of their hidden embeddings,
\[
\mathrm{cos}(z_i,z_j) := \frac{\langle z_i,z_j\rangle}{\|z_i\|_2\|z_j\|_2}.
\]
Let $N_k(i;Z)$ denote the set of the $k$ nodes with largest cosine similarity to node $i$ in the embedding $Z$. The nearest neighbor overlap is given by
\begin{equation}
D_{\mathrm{NN}@k}(Z,\widetilde Z)
=
\frac{1}{|S|}
\sum_{i\in S}
\frac{|N_k(i;Z)\cap N_k(i;\widetilde Z)|}{k},
\end{equation}
with $k=20$ in all reported experiments. In the real-data script, these geometry metrics are evaluated on a random subsample of at most $500$ held-out test nodes to keep the computation lightweight. 

\paragraph{Compute resources.}
All experiments were run on a workstation with an Intel Core i7-10750H CPU, 16\,GB RAM, and an NVIDIA GeForce GTX 1650 Ti with Max-Q Design GPU (4\,GB GDDR6 VRAM). In practice, the provided code ran primarily on CPU, with approximate runtimes of 5–15 minutes per real dataset and about 20–40 minutes for the full synthetic suite.

\subsection{Additional qualitative and downstream figures}
\label{app:extra_figures}

Figure~\ref{fig:procrustes_app} provides a qualitative view of the dense and sparse hidden embeddings after projection to two principal components and alignment by an optimal orthogonal Procrustes transform. The substantial overlap after alignment indicates that much of the dense--sparse discrepancy is explained by an approximately rigid transformation rather than severe geometric collapse.

\begin{figure}[htbp!]
    \centering
    \includegraphics[width=0.7\linewidth]{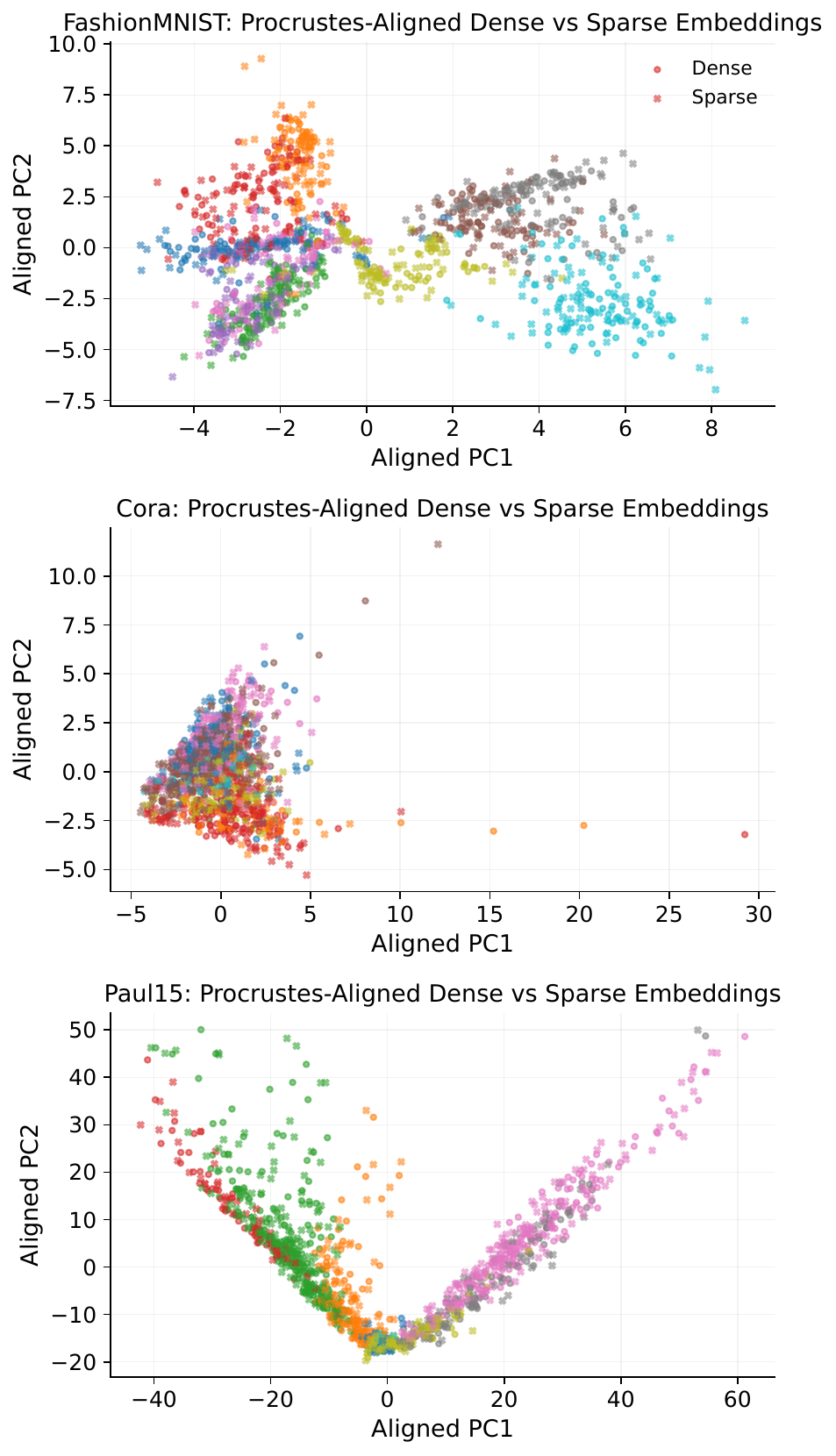}
    \caption{\textbf{Procrustes-aligned dense and sparse embeddings.}
    Dense and sparse hidden embeddings on held-out nodes after projection to two principal components and optimal orthogonal alignment of the sparse embedding to the dense one. The substantial overlap after alignment indicates that much of the dense--sparse discrepancy is explained by an approximately rigid transformation rather than severe deformation of the learned representation geometry.}
    \label{fig:procrustes_app}
\end{figure}

\subsection{Knowledge distillation under graph distortion}
\label{app:teacher_student}
This experiment provides a downstream diagnostic complementary to the representation-geometry metrics in the main text.
Here, we study teacher--student transfer on the synthetic graphs. A teacher is first trained on the dense graph. We then sparsify the graph and train a student on the sparse graph using a mixture of cross-entropy loss and KL-based knowledge distillation from the dense teacher. Figure~\ref{fig:teacher_student_app} shows that larger empirical graph distortion leads to larger KL divergence in teacher behavior, and that larger teacher KL is associated with worse KD student accuracy. This provides a downstream interpretation of representation distortion: graph perturbations alter the soft predictive information available for transfer.

\begin{figure}[htbp!]
    \centering
    \includegraphics[width=\linewidth]{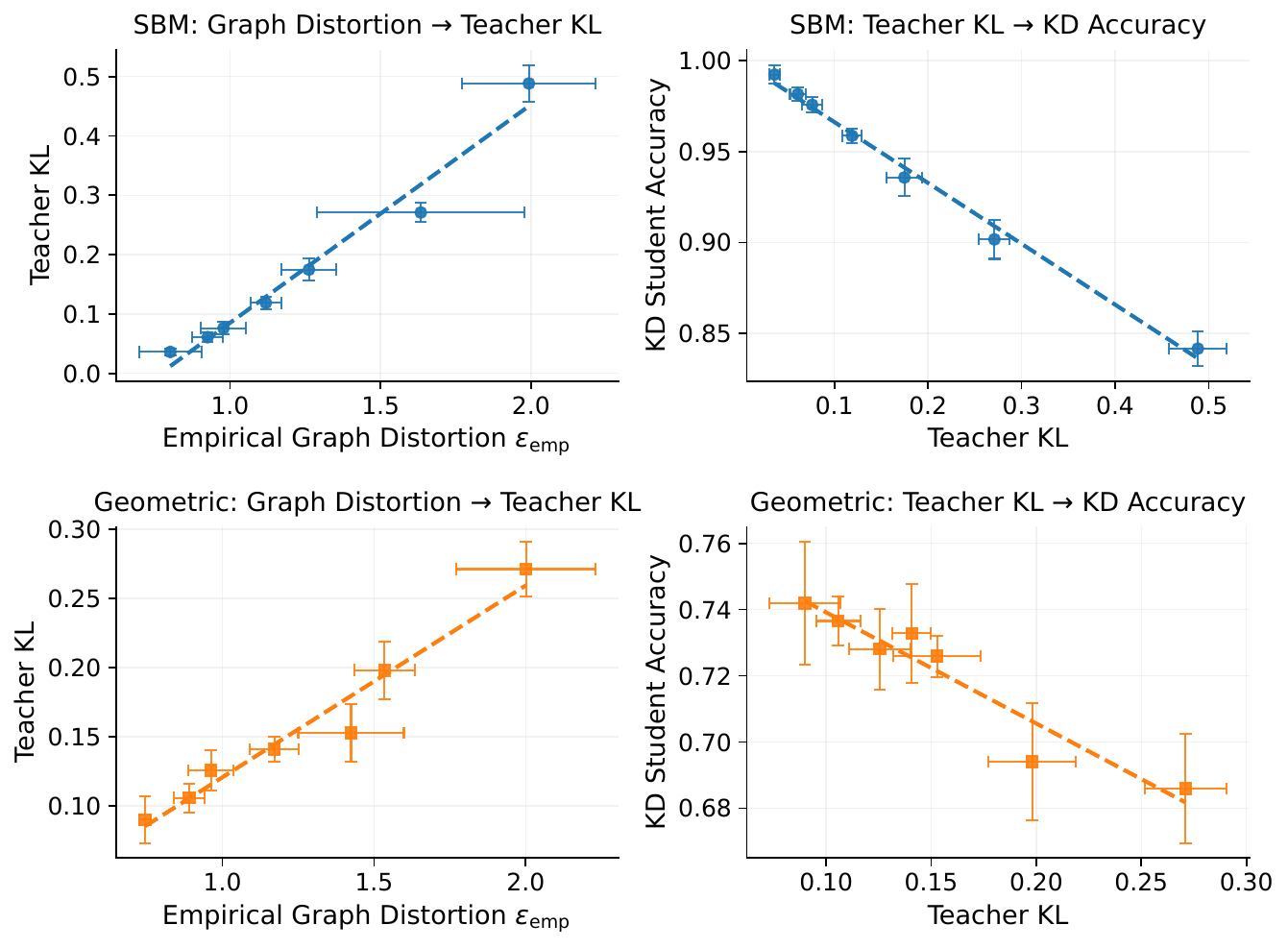}
    \caption{\textbf{Teacher--student consequences of graph distortion.}
    On synthetic SBM and geometric graphs, larger empirical graph distortion produces larger teacher KL divergence, and larger KL is associated with lower KD student accuracy. This shows that graph distortion affects not only hidden geometry but also the transferred soft decision structure. Error bars denote one standard deviation across four repeated runs.}
    \label{fig:teacher_student_app}
\end{figure}

\end{document}